\tikzset{
  >=latex,
  every node/.style={thick, circle, draw, minimum height=1.7em, inner sep=0pt, text centered},
  every path/.style={thick},
}
\definecolor{Crimson}{RGB}{165, 28, 48}
\definecolor{SteelBlue}{RGB}{70, 130, 180}
\definecolor{DarkRed}{RGB}{165, 0, 38}
\definecolor{LightRed}{RGB}{215, 48, 39}
\definecolor{DarkOrange}{RGB}{244, 109, 67}
\definecolor{LightOrange}{RGB}{253, 174, 97}
\definecolor{MidYellow}{RGB}{254, 224, 144}
\definecolor{MiddleBlue}{RGB}{116, 173, 209}
\definecolor{SkyBlue}{RGB}{69, 117, 180}
\definecolor{SeaBlue}{RGB}{49, 54, 149}
\definecolor{Pink}{HTML}{E91E63}
\definecolor{DeepOrange}{HTML}{FF5722}
\definecolor{Blue}{HTML}{2196F3}
\definecolor{Orange}{HTML}{FF9800}
\definecolor{Cyan}{HTML}{00BCD4}
\definecolor{Green}{HTML}{4CAF50}
\definecolor{DeepPurple}{HTML}{673AB7}
\definecolor{Purple}{HTML}{9C27B0}
\definecolor{Red}{HTML}{F44336}
\definecolor{Brown}{HTML}{795548}
\definecolor{Teal}{HTML}{009688}
\definecolor{Yellow}{HTML}{FFEB3B}
\definecolor{Grey}{HTML}{9E9E9E}
\definecolor{Lime}{HTML}{CDDC39}
\definecolor{Indigo}{HTML}{3F51B5}
\definecolor{LightGreen}{HTML}{8BC34A}
\definecolor{BlueGrey}{HTML}{607D8B}
\definecolor{LightBlue}{HTML}{03A9F4}
\definecolor{Amber}{HTML}{FFC107}
\newcommand{\bR}{\mathbb{R}}
\newcommand{\cN}{\mathcal{N}}
\newcommand{\cG}{\mathcal{G}}
\newcommand{\tcG}{\tilde{\mathcal{G}}}
\newcommand{\bP}{\mathbb{P}}
\newcommand{\hY}{R}
\newcommand{\hR}{R}
\def\eqp{\: .}
\def\eqc{\: ,}
\newcommand*{\ie}{i.e.\@\xspace}
\newcommand*{\wrt}{w.r.t.\@\xspace}
\newcommand{\secref}[1]{Section~\ref{#1}}
\newcommand{\figref}[1]{Figure~\ref{#1}}
\newcommand{\Figref}[1]{Figure~\ref{#1}}
\newcommand{\usub}[2]{\underset{#1}{\underbrace{#2}}}
\newcommand\given{{\,|\,}}
\DeclareMathOperator{\E}{\mathbb{E}}
\DeclareMathOperator{\indep}{\perp\!\!\!\perp}
\newtheorem{theorem}{Theorem}
\newtheorem*{theorem*}{Theorem}
\newtheorem*{lemma*}{Lemma}
\newtheorem{corollary}{Corollary}
\newtheorem*{corollary*}{Corollary}
\newtheorem{proposition}{Proposition}
\newtheorem*{proposition*}{Proposition}
\theoremstyle{definition}
\newtheorem{definition}{Definition}
\setlist[enumerate]{leftmargin=*}
\newif\ifproof
\author{
Niki Kilbertus\textsuperscript{$\dagger$}\textsuperscript{$\ddagger$}\\
\texttt{nkilbertus@tue.mpg.de}
\And
Mateo Rojas-Carulla\textsuperscript{$\dagger$}\textsuperscript{$\ddagger$}\\
\texttt{mrojas@tue.mpg.de}
\And
Giambattista Parascandolo\textsuperscript{$\dagger$}\textsuperscript{\S}\\
\texttt{gparascandolo@tue.mpg.de}
\AND
Moritz Hardt\textsuperscript{$\ast$}\\
\texttt{hardt@berkeley.edu}
\And
Dominik Janzing\textsuperscript{$\dagger$}\\
\texttt{janzing@tue.mpg.de}
\And
Bernhard Sch\"olkopf\textsuperscript{$\dagger$}\\
\texttt{bs@tue.mpg.de}
\AND
\normalfont
\normalsize\textsuperscript{$\dagger$}Max Planck Institute for Intelligent Systems\\
\normalsize\textsuperscript{$\ddagger$}University of Cambridge\\
\normalsize\textsuperscript{\S}Max Planck ETH Center for Learning Systems\\
\normalsize\textsuperscript{$\ast$}University of California, Berkeley
}
\title{Avoiding Discrimination through Causal Reasoning}
\begin{document}
\maketitle

\begin{abstract}
  Recent work on fairness in machine learning has focused on various statistical
  discrimination criteria and how they trade off. Most of these criteria are
  observational: They depend only on the joint distribution of predictor,
  protected attribute, features, and outcome. While convenient to work with,
  observational criteria have severe inherent limitations that prevent them from
  resolving matters of fairness conclusively.

  Going beyond observational criteria, we frame the problem of discrimination
  based on protected attributes in the language of causal reasoning. This
  viewpoint shifts attention from ``What is the right fairness criterion?'' to
  ``What do we want to assume about our model of the causal data generating process?''
  Through the lens of causality, we make several contributions. First, we
  crisply articulate why and when observational criteria fail, thus formalizing
  what was before a matter of opinion. Second, our approach exposes previously
  ignored subtleties and why they are fundamental to the problem. Finally, we
  put forward natural causal non-discrimination criteria and develop algorithms
  that satisfy them.
\end{abstract}

\section{Introduction}
\label{sec:intro}

As machine learning progresses rapidly, its societal impact has come under
scrutiny. An important concern is potential discrimination based on protected
attributes such as gender, race, or religion. Since learned predictors and risk
scores increasingly support or even replace human judgment, there is an
opportunity to formalize what harmful discrimination means and to design
algorithms that avoid it. However, researchers have found it difficult to agree
on a single measure of discrimination. As of now, there are several competing
approaches, representing different opinions and striking different trade-offs.
Most of the proposed fairness criteria are observational: They depend only on
the joint distribution of predictor~$\hY,$ protected attribute~$A$,
features~$X$, and outcome $Y.$ For example, the natural requirement that~$\hY$
and~$A$ must be statistically independent is referred to as \emph{demographic
parity}. Some approaches transform the features~$X$ to obfuscate the information
they contain about~$A$~\cite{Zemel2013}. The recently proposed \emph{equalized
odds} constraint~\cite{Hardt2016} demands that the predictor $\hY$ and the
attribute $A$ be independent conditional on the actual outcome $Y.$ All three
are examples of observational approaches.

A growing line of work points at the insufficiency of existing definitions.
Hardt, Price and Srebro~\cite{Hardt2016} construct two scenarios with
\emph{intuitively} different social interpretations that admit identical joint
distributions over $(\hY, A, Y, X)$. Thus, no observational criterion can
distinguish them. While there are non-observational criteria, notably the early
work on individual fairness~\cite{Dwork2012}, these have not yet gained
traction.  So, it might appear that the community has reached an impasse.

\subsection{Our contributions}

We assay the problem of discrimination in machine learning in the language of
causal reasoning. This viewpoint supports several contributions:

\begin{itemize}
  \item Revisiting the two scenarios proposed in~\cite{Hardt2016}, we articulate
    a natural causal criterion that formally distinguishes them. In particular,
    we show that observational criteria are unable to determine if a protected
    attribute has \emph{direct causal influence} on the predictor that is not
    mitigated by \emph{resolving} variables.
  \item We point out subtleties in fair decision making that arise naturally
    from a causal perspective, but have gone widely overlooked in the past.
    Specifically, we formally argue for the need to distinguish between the
    underlying concept behind a protected attribute, such as race or gender, and
    its \emph{proxies} available to the algorithm, such as visual features or
    name.
  \item We introduce and discuss two natural causal criteria centered around the
    notion of \emph{interventions} (relative to a causal graph) to formally
    describe specific forms of discrimination.
  \item Finally, we initiate the study of algorithms that avoid these forms of
    discrimination. Under certain linearity assumptions about the underlying
    causal model generating the data, an algorithm to remove a specific kind of
    discrimination leads to a simple and natural heuristic.
\end{itemize}

At a higher level, our work proposes a shift from trying to find a single
statistical fairness criterion to arguing about properties of the data and which
assumptions about the generating process are justified. Causality provides a
flexible framework for organizing such assumptions.

\subsection{Related work}

Demographic parity and its variants have been discussed in numerous papers,
e.g., \cite{Feldman2015, Zemel2013, Zafar2017, Edwards2015}. While demographic
parity is easy to work with, the authors of \cite{Dwork2012} already highlighted
its insufficiency as a fairness constraint. In an attempt to remedy the
shortcomings of demographic parity~\cite{Hardt2016} proposed two notions,
\emph{equal opportunity} and \emph{equal odds}, that were also considered
in~\cite{Zafar2017a}.  A review of various fairness criteria can be found
in~\cite{Berk2017}, where they are discussed in the context of criminal justice.
In~\cite{Kleinberg2016, Chouldechova2016} it has been shown that imperfect
predictors cannot simultaneously satisfy equal odds and \emph{calibration}
unless the groups have identical base rates, \ie{} rates of positive outcomes.

A starting point for our investigation is the unidentifiability result
of~\cite{Hardt2016}. It shows that observedvational criteria are too weak to
distinguish two intuitively very different scenarios. However, the work does not
provide a formal mechanism to articulate why and how these scenarios should be
considered different. Inspired by Pearl's causal interpretation of Simpson's
paradox~\cite[Section 6]{Pearl2009}, we propose causality as a way of coping
with this unidentifiability result.

An interesting non-observational fairness definition is the notion of
\emph{individual fairness}~\cite{Dwork2012} that assumes the existence of a
similarity measure on individuals, and requires that any two similar individuals
should receive a similar distribution over outcomes. More recent work lends
additional support to such a definition~\cite{Friedler2016}. From the
perspective of causality, the idea of a similarity measure is akin to the method
of \emph{matching} in counterfactual reasoning~\cite{Rosenbaum1983,
Qureshi2016}. That is, evaluating approximate counterfactuals by comparing
individuals with similar values of covariates excluding the protected attribute.

Recently,~\cite{Kusner2017} put forward one possible causal definition, namely
the notion of \emph{counterfactual fairness}. It requires modeling
counterfactuals on a per individual level, which is a delicate task. Even
determining the effect of \emph{race} at the group level is difficult; see the
discussion in~\cite{VanderWeele2014}. The goal of our paper is to assay a more
general causal framework for reasoning about discrimination in machine learning
without committing to a single fairness criterion, and without committing to
evaluating individual causal effects. In particular, we draw an explicit
distinction between the protected attribute (for which interventions are often
impossible in practice) and its proxies (which sometimes can be intervened
upon).

Moreover, causality has already been employed for the discovery of
discrimination in existing data sets by~\cite{Bonchi2015, Qureshi2016}. Causal
graphical conditions to identify \emph{meaningful partitions} have been proposed
for the discovery and prevention of certain types of discrimination by
preprocessing the data~\cite{Zhang2017b}. These conditions rely on the
evaluation of \emph{path specific effects}, which can be traced back all the way
to~\cite[Section 4.5.3]{Pearl2009}. The authors of~\cite{Nabi2017} recently
picked up this notion and generalized Pearl's approach by a constraint based
prevention of discriminatory path specific effects arising from counterfactual
reasoning. Our research was done independently of these works.

\subsection{Causal graphs and notation}

Causal graphs are a convenient way of organizing assumptions about the data
generating process. We will generally consider causal graphs involving a
protected attribute~$A,$ a set of proxy variables~$P,$ features~$X,$ a
predictor~$R$ and sometimes an observed outcome~$Y.$ For background on causal
graphs see~\cite{Pearl2009}. In the present paper a \emph{causal graph} is a
directed, acyclic graph whose nodes represent random variables. A \emph{directed
path} is a sequence of distinct nodes~$V_1, \dots, V_k$, for~$k \ge 2$, such
that~$V_i \to V_{i+1}$ for all~$i \in \{1, \dots,k-1\}$. We say a directed path
is \emph{blocked by a set of nodes~$Z$}, where~$V_1, V_k \notin Z$, if~$V_i \in
Z$ for some~$i \in \{2, \dots, k-1 \}$.\footnote{As it is not needed in our
work, we do not discuss the graph-theoretic notion of d-separation.}

A \emph{structural equation model} is a set of equations~$V_i = f_i(pa(V_i),
N_i)$, for~$i \in \{1, \dots, n\}$, where~$pa(V_i)$ are the parents of~$V_i$,
\ie{} its \emph{direct causes}, and the~$N_i$ are independent noise variables. We interpret
these equations as assignments. Because we assume acyclicity, starting from the
roots of the graph, we can recursively compute the other variables, given the
noise variables. This leads us to view the structural equation model and its
corresponding graph as a \emph{data generating model}. The predictor~$R$ maps
inputs, e.g., the features~$X$, to a predicted output. Hence we model it as a
childless node, whose parents are its input variables. Finally, note that given
the noise variables, a structural equation model entails a unique joint
distribution; however, the same joint distribution can usually be entailed by
multiple structural equation models corresponding to distinct causal structures.

\section{Unresolved discrimination and limitations of observational criteria}
\label{sec:limitations}

\begin{wrapfigure}{r}{0.31\textwidth}
  \begin{center}
  \begin{tikzpicture}
    \pgfmathsetmacro{\d}{1}
    \node (A) at (0,0) {$A$};
    \node (X) at (-0.6,-1) {$X$};
    \node (R) at (0.6,-1) {$\hR$};
    \draw[->] (A) -- (X);
    \draw[->] (A) -- (R);
    \draw[->] (X) -- (R);
  \end{tikzpicture}%
  \caption{The admission decision~$\hR$ does not only directly depend on
  gender~$A$, but also on department choice~$X$, which in turn is also
  affected by gender~$A$.}
  \label{fig:pearl}
  \end{center}
\end{wrapfigure}
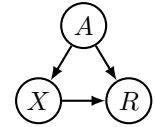

To bear out the limitations of observational criteria, we turn to Pearl's
commentary on claimed gender discrimination in Berkeley college
admissions~\cite[Section 4.5.3]{Pearl2009}. Bickel~\cite{Bickel1975} had shown
earlier that a lower college-wide admission rate for women than for men was
explained by the fact that women applied in more competitive departments. When
adjusted for department choice, women experienced a slightly higher acceptance
rate compared with men. From the causal point of view, what matters is the
\emph{direct effect} of the protected attribute (here, gender~$A$) on the
decision (here, college admission~$\hR$) that cannot be ascribed to a
\emph{resolving variable} such as department choice~$X$, see \figref{fig:pearl}.
We shall use the term \emph{resolving variable} for any variable in the causal
graph that is influenced by~$A$ in a manner that we accept as
non-discriminatory. With this convention, the criterion can be stated as
follows.

\begin{definition}[Unresolved discrimination]
  A variable~$V$ in a causal graph exhibits \emph{unresolved discrimination} if
  there exists a directed path from~$A$ to~$V$ that is not blocked by a
  resolving variable and~$V$ itself is non-resolving.
\end{definition}

Pearl's commentary is consistent with what we call the \emph{skeptic viewpoint}. All paths from
the protected attribute~$A$ to $R$ are problematic, unless they are justified by
a resolving variable. The presence of unresolved discrimination in the
predictor~$R$ is worrisome and demands further scrutiny. In practice,~$R$ is not
a priori part of a given graph. Instead it is our objective to construct it as a
function of the features~$X$, some of which might be resolving. Hence we should
first look for unresolved discrimination in the features. A canonical way to
avoid unresolved discrimination in~$R$ is to only input the set of features that
do not exhibit unresolved discrimination. However, the remaining features might
be affected by non-resolving \emph{and} resolving variables. In
\secref{sec:criteria} we investigate whether one can exclusively remove
unresolved discrimination from such features. A related notion of ``explanatory
features'' in a non-causal setting was introduced in~\cite{Kamiran2013}.

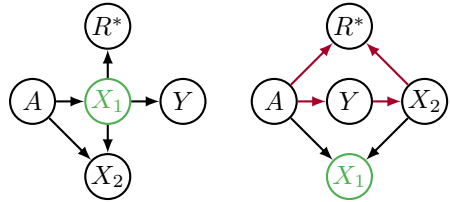
\begin{wrapfigure}{r}{0.5\linewidth}
  \centering
\begin{tikzpicture}
  \pgfmathsetmacro{\d}{1}
  \node (A) at (0,0) {$A$};
  \node[thick, Green] (X1) at (\d,0) {$X_1$};
  \node (Y) at (2*\d,0) {$Y$};
  \node (X2) at (\d,-\d) {$X_2$};
  \node (R) at (\d,\d) {$\hR^*$};
  \draw[->] (A) -- (X1);
  \draw[->] (X1) -- (Y);
  \draw[->] (A) -- (X2);
  \draw[->] (X1) -- (X2);
  \draw[->] (X1) -- (R);
\end{tikzpicture}%
  \hspace{0.5cm}
\begin{tikzpicture}
  \pgfmathsetmacro{\d}{1}
  \node (A) at (0,0) {$A$};
  \node (Y) at (\d,0) {$Y$};
  \node[thick, Green] (X1) at (\d,-\d) {$X_1$};
  \node (X2) at (2*\d,0) {$X_2$};
  \node (R) at (\d,\d) {$\hR^*$};
  \draw[->, DarkRed] (A) -- (Y);
  \draw[->, DarkRed] (Y) -- (X2);
  \draw[->, DarkRed] (X2) -- (R);
  \draw[->, DarkRed] (A) -- (R);
  \draw[->] (A) -- (X1);
  \draw[->] (X2) -- (X1);
\end{tikzpicture}%
  \caption{Two graphs that may generate the same joint distribution for the
  Bayes optimal unconstrained predictor~$\hR^*$. If~$X_1$ is a resolving
  variable,~$R^*$ exhibits unresolved discrimination in the right graph (along
  the red paths), but not in the left one.}
  \label{fig:unidentifiability}
\end{wrapfigure}

The definition of unresolved discrimination in a predictor has some interesting
special cases worth highlighting. If we take the set of resolving variables to
be empty, we intuitively get a causal analog of demographic parity. No directed paths from~$A$ to~$R$ are allowed, but~$A$ and~$R$ can
still be statistically dependent. Similarly, if we choose the set of resolving
variables to be the singleton set~$\{Y\}$ containing the true outcome, we obtain
a causal analog of equalized odds where strict independence is not necessary.
The causal intuition implied by ``the protected attribute should not affect the
prediction'', and ``the protected attribute can only affect the prediction when
the information comes through the true label'', is neglected by (conditional)
statistical independences~$A \indep R$, and~$A \indep R \given Y$, but well
captured by only considering dependences mitigated along directed causal paths.

We will next show that observational criteria are fundamentally unable to
determine whether a predictor exhibits unresolved discrimination or not. This
is true even if the predictor is \emph{Bayes optimal}. In passing, we also note that
fairness criteria such as equalized odds may or may not exhibit unresolved
discrimination, but this is again something an observational criterion cannot
determine.

\begin{theorem}\label{thm:unidentifiability}
  Given a joint distribution over the protected attribute~$A$, the true
  label~$Y$, and some features~$X_1, \dots, X_n$, in which we have already
  specified the resolving variables, no observational criterion can generally
  determine whether the Bayes optimal unconstrained predictor or the Bayes
  optimal equal odds predictor exhibit unresolved discrimination.
\end{theorem}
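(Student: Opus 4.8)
The plan is to prove unidentifiability by exhibiting a single joint distribution that is realized by two structural equation models whose causal graphs are precisely the two graphs of \figref{fig:unidentifiability}, and by showing that the Bayes optimal predictor $R^*$ exhibits unresolved discrimination with respect to the right graph but not the left one. Since an observational criterion is, by definition, a function of the joint distribution of $(R, A, Y, X_1, \dots, X_n)$ alone, it must return the same verdict for both models, even though the ground-truth causal answer differs. It suffices to do this for $n = 2$, since one such counterexample refutes the claim that an observational criterion can \emph{generally} decide the question.

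First I would read off the Bayes optimal predictor from each model's structural equations. In the left graph $Y$ is generated as a function of $X_1$ and independent noise, so the unconstrained Bayes optimal predictor is $R^* = \E[Y \given X_1]$, a function of the resolving variable $X_1$ only; every directed path from $A$ to $R^*$ then passes through $X_1$, so by the definition of unresolved discrimination there is none. In the right graph $Y$ is generated from $A$ and is itself a cause of $X_2$, so $(A, X_2)$ already carries all the information about $Y$ that $X_1$ could provide, and the Bayes optimal predictor is $R^* = \E[Y \given A, X_2]$. Now the directed path $A \to R^*$ (and likewise $A \to Y \to X_2 \to R^*$) is not blocked by the resolving variable $X_1$, so here $R^*$ \emph{does} exhibit unresolved discrimination.

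Next I would construct the shared joint distribution, which is where I expect the main obstacle to lie. The two graphs have different skeletons---the right graph contains an $A$--$Y$ edge that the left one lacks---so they are not Markov equivalent, and any distribution realizable by both must simultaneously satisfy $Y \indep \{A, X_2\} \given X_1$ (forced by the left model) and $Y \indep X_1 \given \{A, X_2\}$ (forced by the right model). In a generic model these constraints clash and would force $Y$ to be trivial. The way around this is to allow a degenerate, unfaithful distribution in which $X_1$ is a deterministic function of $(A, X_2)$: then $\sigma(A, X_1, X_2) = \sigma(A, X_2)$, the random variables $\E[Y \given X_1]$ and $\E[Y \given A, X_2]$ coincide whenever $\E[Y \given A, X_2]$ depends on $(A, X_2)$ only through $X_1$, and both conditional independences hold on the support. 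Concretely I would write both structural equation models with free parameters, set the offending noise terms to make $X_1$ and $(A, X_2)$ mutually informative in this deterministic way, and solve the moment-matching equations so that the induced laws of $(A, X_1, X_2, Y, R^*)$ are identical. Verifying that such parameters exist, give a nontrivial $Y$, and yield one and the same joint law is the crux of the argument.

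Finally I would conclude. With the two models fixed, the joint distribution of $(R^*, A, Y, X_1, X_2)$ is identical, so no observational criterion can separate them, yet $R^*$ is free of unresolved discrimination under the left graph and exhibits it under the right one. For the equal odds predictor I would repeat the construction with $R$ taken to be the optimal predictor subject to the constraint $R \indep A \given Y$ rather than the unconstrained optimum; rewiring the predictor node to its constrained inputs in the same pair of graphs again produces two models with a common joint law but opposite unresolved-discrimination verdicts, establishing the statement in that case as well.
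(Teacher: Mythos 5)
Your overall strategy is exactly the paper's: reuse the two graphs of \figref{fig:unidentifiability}, realize one joint distribution over $(A,Y,X_1,X_2,\hR^*)$ with two structural equation models, observe that with $X_1$ resolving the Bayes optimal predictor exhibits unresolved discrimination only under the right graph, and conclude that no function of the joint distribution can decide the question. You also correctly isolate the two clashing constraints ($Y \indep \{A,X_2\} \given X_1$ from the left model, $Y \indep X_1 \given \{A,X_2\}$ from the right) and the correct escape hatch: an unfaithful distribution in which $X_1$ is a deterministic function of $(A,X_2)$, so that both independences can hold with a nontrivial $Y$. This is precisely the mechanism the paper uses ($X_2 = X_1 - A$ on the left, $X_1 = A + X_2$ on the right).

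The genuine gap is that the step you yourself call ``the crux''---exhibiting parameters for which the two models induce \emph{the same} nontrivial joint law---is left as a promissory note about ``solving the moment-matching equations.'' That existence claim is the entire technical content of the theorem, and it is not a generic moment-matching problem: one needs an exact distributional identity, and the required choice is delicate. Concretely, the paper takes, on the left, $A = Ber(\nicefrac{1}{2})$, $X_1 \given A$ a mixture of $\cN(A+1,1)$ and $\cN(A-1,1)$ with weights $\sigma(2A)$ and $\sigma(-2A)$, $Y = Ber(\sigma(2X_1))$, $X_2 = X_1 - A$; and on the right, $Y = Ber(\sigma(2A))$, $X_2 = \cN(Y,1)$, $X_1 = A + X_2$. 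The logistic weights are engineered so that Bayes' rule in the right model returns $\Pr(Y = y \given A=a, X_2=x_2) = \sigma(2y(a+x_2))$, which depends on $(a,x_2)$ only through $x_1 = a + x_2$ and coincides with $\Pr(Y=y \given X_1=x_1)$ on the left, while the mixture makes the law of $X_1 \given A$ agree across the two models; both Bayes optimal predictors are then $\hR^* = X_1$. Without producing such an explicit (or otherwise certified) construction, the proof is incomplete. The same remark applies to your equal-odds paragraph: rather than ``repeating the construction,'' the paper reuses the identical joint distribution and only needs to identify the optimal equal odds score as $\tilde{\hR} = X_2$ (since $X_2 \indep A \given Y$ in the right graph), which again you would need to verify rather than assert.
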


All proofs for the statements in this paper are in the supplementary material.

The two graphs in \figref{fig:unidentifiability} are taken
from~\cite{Hardt2016}, which we here reinterpret in the causal context to prove
Theorem~\ref{thm:unidentifiability}. We point out that there is an established
set of conditions under which unresolved discrimination can, in fact, be
determined from observational data. Note that the two graphs are not Markov
equivalent. Therefore, to obtain the same joint distribution we must violate a
condition called \emph{faithfulness}.\footnote{If we do assume the Markov
condition and faithfulness, then conditional independences determine the graph
up to its so called \emph{Markov equivalence class}.} We later argue that
violation of faithfulness is by no means pathological, but emerges naturally
when designing predictors.  In any case, interpreting conditional dependences
can be difficult in practice~\cite{Cornia2014}.

\section{Proxy discrimination and interventions}
\label{sec:proxies}

We now turn to an important aspect of our framework. Determining causal effects
in general requires modeling interventions. Interventions on deeply rooted
individual properties such as \emph{gender} or \emph{race} are notoriously
difficult to conceptualize---especially at an individual level, and impossible
to perform in a randomized trial. VanderWeele et al.~\cite{VanderWeele2014}
discuss the problem comprehensively in an epidemiological setting. From a
machine learning perspective, it thus makes sense to separate the protected
attribute~$A$ from its potential \emph{proxies}, such as name, visual features,
languages spoken at home, etc. Intervention based on proxy variables poses a
more manageable problem. By deciding on a suitable proxy we can find an adequate
mounting point for determining and removing its influence on the prediction.
Moreover, in practice we are often limited to imperfect measurements of~$A$ in
any case, making the distinction between root concept and proxy prudent.

As was the case with resolving variables, a \emph{proxy} is a priori nothing
more than a descendant of~$A$ in the causal graph that we choose to label as a
proxy. Nevertheless in reality we envision the proxy to be a clearly defined
observable quantity that is significantly correlated with~$A,$ yet in our view
should not affect the prediction.

\begin{definition}[Potential proxy discrimination]\label{def:proxy}
  A variable~$V$ in a causal graph exhibits \emph{potential proxy
  discrimination}, if there exists a directed path from~$A$ to~$V$ that is
  blocked by a proxy variable and~$V$ itself is not a proxy.
\end{definition}

Potential proxy discrimination articulates a causal criterion that is in a sense
dual to unresolved discrimination. From the \emph{benevolent viewpoint}, we \emph{allow} any path from~$A$
to~$\hR$ unless it passes through a proxy variable, which we consider worrisome.
This viewpoint acknowledges the fact that the influence of~$A$ on the graph may
be complex and it can be too restraining to rule out all but a few designated
features. In practice, as with unresolved discrimination, we can naively build
an unconstrained predictor based only on those features that do not exhibit
potential proxy discrimination. Then we must not provide~$P$ as input to~$R;$
unawareness, \ie{} excluding~$P$ from the inputs of~$R$, suffices. However, by
granting~$R$ access to~$P$, we can carefully tune the function~$R(P, X)$ to
cancel the implicit influence of~$P$ on features~$X$ that exhibit potential
proxy discrimination by the explicit dependence on $P$. Due to this possible
cancellation of paths, we called the path based criterion \emph{potential} proxy
discrimination. When building predictors that exhibit no \emph{overall proxy
discrimination}, we precisely aim for such a cancellation.

Fortunately, this idea can be conveniently expressed by an \emph{intervention}
on~$P$, which is denoted by~$do(P=p)$~\cite{Pearl2009}. Visually, intervening
on~$P$ amounts to removing all incoming arrows of~$P$ in the graph;
algebraically, it consists of replacing the structural equation of $P$ by $P=p$,
\ie{} we put point mass on the value~$p$.

\begin{definition}[Proxy discrimination]\label{def:overalldisc}
  A predictor~$\hR$ exhibits no \emph{proxy discrimination} based on a proxy~$P$
  if for all~$p, p'$
  \begin{equation}\label{eq:proxy-eq}
    \bP(\hR \given do(P=p) ) = \bP(\hR \given do(P=p') ) \eqp
  \end{equation}
\end{definition}

The interventional characterization of proxy discrimination leads to a simple
procedure to remove it in causal graphs that we will turn to in the next
section. It also leads to several natural variants of the definition that we
discuss in Section~\ref{sec:variants}. We remark that
Equation~\eqref{eq:proxy-eq} is an equality of probabilities in the
``do-calculus'' that cannot in general be inferred by an observational method,
because it depends on an underlying causal graph, see the discussion
in~\cite{Pearl2009}. However, in some cases, we do not need to resort to
interventions to avoid proxy discrimination.

\begin{proposition}\label{pro:unawareness}
  If there is no directed path from a proxy to a feature, unawareness avoids
  proxy discrimination.
\end{proposition}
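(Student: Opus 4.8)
The plan is to argue directly from the structural equation model together with the definition of intervention. Under unawareness the predictor's only parents are the features~$X$, so that $R = f_R(X, N_R)$ for some function~$f_R$ and independent noise~$N_R$, with $P$ excluded from the arguments. The intervention~$do(P=p)$ replaces the structural equation of~$P$ by the constant assignment $P = p$ while leaving every other structural equation untouched. The strategy is to show that this surgery leaves the joint law of the inputs to~$R$ completely unchanged, whence the law of~$R$ cannot depend on~$p$.

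The key step is an invariance lemma: for any node~$V$ that is not a descendant of~$P$---equivalently, for which there is no directed path from~$P$ to~$V$---and for every fixed realization of the noise variables, the value of~$V$ computed in the intervened model coincides with its value in the observational model, and in particular does not depend on~$p$. I would prove this by induction along a topological ordering of the graph. The base case handles root nodes, whose values are functions of their noise alone and so are untouched by any intervention on~$P$. For the inductive step, note that if~$V$ is not a descendant of~$P$ then neither is any parent~$W$ of~$V$ (otherwise $P \to \cdots \to W \to V$ would be a directed path from~$P$ to~$V$), and also $V \neq P$. By the inductive hypothesis every parent value is unchanged, the noise~$N_V$ is unchanged, and the structural equation $V = f_V(pa(V), N_V)$ is not the one we replaced; hence~$V$ is unchanged. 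The subtlety to watch is that~$V$ and~$P$ may share common ancestors (e.g.\ both caused by~$A$): the intervention only deletes the arrows entering~$P$, not the arrows feeding the shared ancestors into~$V$'s lineage, so these common-cause contributions to~$V$ survive intact.

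Applying the lemma to the features: by hypothesis no feature serving as an input to~$R$ is a descendant of~$P$, so each such feature has the same value---as a function of the noise---under~$do(P=p)$ as in the un-intervened model, for every~$p$. Consequently the joint distribution of the inputs~$X$ is identical under~$do(P=p)$ and under~$do(P=p')$ for all~$p, p'$. Since $R = f_R(X, N_R)$ is a deterministic function of these inputs together with its own noise~$N_R$, whose distribution the intervention does not touch, the distribution $\bP(R \given do(P=p))$ is the same for all~$p$, which is exactly the no--proxy-discrimination condition~\eqref{eq:proxy-eq}.

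I expect the induction in the invariance lemma to be the only real content; everything else is bookkeeping. The one place to be careful is precisely the common-ancestor case just mentioned, since it is tempting to conflate ``$X$ is unaffected by~$do(P=p)$'' with ``$X$ is independent of~$P$,'' which is false in general---the correct and weaker fact we need is only that~$X$ lies outside the causal influence of~$P$.
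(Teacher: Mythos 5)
Your proposal is correct and follows essentially the same route as the paper: the paper's proof substitutes the structural equations down to root nodes, writes $\hR = r(ta^{\cG}(X)) = r(ta^{\cG}_P(X))$ since $P$ is not an ancestor of $X$ in the intervened graph, and concludes $\bP(\hR \given do(P=p)) = \bP(\hR)$; your topological-order induction is just a more explicit rendering of that same substitution argument. Your remark distinguishing ``$X$ is causally unaffected by $do(P=p)$'' from ``$X$ is independent of $P$'' is a correct and worthwhile clarification, but it does not change the substance.
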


\section{Procedures for avoiding discrimination}
\label{sec:criteria}

Having motivated the two types of discrimination that we distinguish, we now
turn to building predictors that avoid them in a given causal model. First, we
remark that a more comprehensive treatment requires individual judgement of not
only variables, but the legitimacy of every existing path that ends in~$R$,
\ie{} evaluation of \emph{path-specific effects}~\cite{Zhang2017b, Nabi2017},
which is tedious in practice. The natural concept of proxies and resolving
variables covers most relevant scenarios and allows for natural removal
procedures.

\begin{figure}
\begin{center}
  \begin{minipage}{0.5\textwidth}
    \begin{tikzpicture}
      \pgfmathsetmacro{\dy}{0.9}
      \node (NP) at (-1,0) {$N_P$};
      \node[thick, Green] (A) at (0,0) {$A$};
      \node (NX) at (1,0) {$N_X$};
      \node[thick, DarkRed] (P) at (-0.5,-\dy) {$P$};
      \node (X) at (0.5,-\dy) {$X$};
      \node (R) at (0,-2*\dy) {$\hR$};
      \node[draw=none] at (-0.7,-2*\dy) {$\tcG$};
      \draw[->] (NP) -- (P);
      \draw[->] (A) -- (P);
      \draw[->] (A) -- (X);
      \draw[->] (NX) -- (X);
      \draw[->] (P) -- (X);
      \draw[->] (P) -- (R);
      \draw[->] (X) -- (R);
    \end{tikzpicture}%
    \hspace{0.5cm}
    \begin{tikzpicture}
      \pgfmathsetmacro{\dy}{0.9}
      \node (NP) at (-1,0) {$N_P$};
      \node[thick, Green] (A) at (0,0) {$A$};
      \node (NX) at (1,0) {$N_X$};
      \node[thick, DarkRed, double] (P) at (-0.5,-\dy) {$P$};
      \node (X) at (0.5,-\dy) {$X$};
      \node (R) at (0,-2*\dy) {$\hR$};
      \node[draw=none] at (-0.7,-2*\dy) {$\cG$};
      \draw[thick, DarkRed, ->] (P) -- (X);
      \draw[thick, DarkRed, ->] (X) -- (R);
      \draw[thick, DarkRed, ->] (P) -- (R);
      \draw[->] (A) -- (X);
      \draw[->] (NX) -- (X);
    \end{tikzpicture}%
    \caption{A template graph~$\tcG$ for proxy discrimination (left) with its
    intervened version~$\cG$ (right). While from the benevolent viewpoint we do not generically prohibit any
    influence from~$A$ on~$\hR$, we want to guarantee that the proxy~$P$ has no
    overall influence on the prediction, by adjusting~$P \to \hR$ to cancel the
    influence along~$P \to X \to \hR$ in the intervened graph.}
  \label{fig:proxytemplate}
  \end{minipage}%
  \hfill
  \begin{minipage}{0.45\textwidth}
    \begin{tikzpicture}
      \pgfmathsetmacro{\dy}{0.9}
      \node (NE) at (-1,0) {$N_E$};
      \node[thick, DarkRed] (A) at (0,0) {$A$};
      \node (NX) at (1,0) {$N_X$};
      \node[thick, Green] (E) at (-0.5,-\dy) {$E$};
      \node (X) at (0.5,-\dy) {$X$};
      \node (R) at (0,-2*\dy) {$\hR$};
      \node[draw=none] at (-0.7,-2*\dy) {$\tcG$};
      \draw[->] (NE) -- (E);
      \draw[->] (A) -- (E);
      \draw[->] (A) -- (X);
      \draw[->] (NX) -- (X);
      \draw[->] (E) -- (X);
      \draw[->] (E) -- (R);
      \draw[->] (X) -- (R);
    \end{tikzpicture}%
    \hspace{0.5cm}
    \begin{tikzpicture}
      \pgfmathsetmacro{\dy}{0.9}
      \node (NE) at (-1,0) {$N_E$};
      \node[thick, DarkRed] (A) at (0,0) {$A$};
      \node (NX) at (1,0) {$N_X$};
      \node[thick, Green, double] (E) at (-0.5,-\dy) {$E$};
      \node (X) at (0.5,-\dy) {$X$};
      \node (R) at (0,-2*\dy) {$\hR$};
      \node[draw=none] at (-0.7,-2*\dy) {$\cG$};
      \draw[thick, DarkRed, ->] (A) -- (X);
      \draw[thick, DarkRed, ->] (X) -- (R);
      \draw[->] (NX) -- (X);
      \draw[->] (E) -- (X);
      \draw[->] (E) -- (R);
    \end{tikzpicture}%
    \caption{A template graph~$\tcG$ for unresolved discrimination (left) with
    its intervened version~$\cG$ (right). While from the skeptical viewpoint we generically do not want~$A$
    to influence~$\hR$, we first intervene on~$E$ interrupting all paths
    through~$E$ and only cancel the remaining influence on~$A$ to~$R$.}
    \label{fig:explanatorytemplate}
  \end{minipage}
\end{center}
\end{figure}

\subsection{Avoiding proxy discrimination}
\label{subsec:proxydiscrimination}

While presenting the general procedure, we illustrate each step in the example
shown in \figref{fig:proxytemplate}. A protected attribute~$A$ affects a
proxy~$P$ as well as a feature~$X$. Both~$P$ and~$X$ have additional unobserved
causes~$N_P$ and~$N_X$, where~$N_P, N_X, A$ are pairwise independent. Finally,
the proxy also has an effect on the features~$X$ and the predictor~$\hR$ is
a function of~$P$ and~$X$. Given labeled training data, our task is to find a
good predictor that exhibits no proxy discrimination within a hypothesis class
of functions~$R_{\theta}(P, X)$ parameterized by a real valued vector~$\theta$.

We now work out a formal procedure to solve this task under specific assumptions
and simultaneously illustrate it in a fully linear example, \ie{} the structural
equations are given by
\begin{equation*}
  P = \alpha_P A + N_P, \qquad
  X = \alpha_X A + \beta P + N_X, \qquad
  \hR_{\theta} = \lambda_P P + \lambda_X X \eqp
\end{equation*}
Note that we choose linear functions parameterized by~$\theta = (\lambda_P,
\lambda_X)$ as the hypothesis class for~$R_{\theta}(P, X)$.

We will refer to the \emph{terminal ancestors of a node~$V$ in a causal
graph~$\mathcal{D}$}, denoted by~$ta^{\mathcal{D}}(V)$, which are those
ancestors of~$V$ that are also root nodes of~$\mathcal{D}$. Moreover, in the
procedure we clarify the notion of \emph{expressibility}, which is an assumption
about the relation of the given structural equations and the hypothesis class we
choose for~$R_{\theta}$.

\begin{proposition}\label{pro:proxydisc}
  If there is a choice of parameters~$\theta_0$ such that~$R_{\theta_0}(P,X)$ is
  constant with respect to its first argument and the structural equations are
  \emph{expressible}, the following procedure returns a predictor from the given
  hypothesis class that exhibits no proxy discrimination and is non-trivial in
  the sense that it can make use of features that exhibit potential proxy
  discrimination.
\end{proposition}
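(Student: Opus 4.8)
The plan is to unfold the procedure implicit in the statement: intervene on the proxy $P$, re-express the resulting predictor within the hypothesis class using \emph{expressibility}, and then choose the parameters so that the interventional distribution of $\hR$ no longer depends on the value to which $P$ is clamped. First I would reduce Equation~\eqref{eq:proxy-eq} to a statement about a single random variable. Under $do(P=p)$ the incoming edges of $P$ are deleted and $P$ is set to $p$, so the descendants of $P$ — here only $X$ and $\hR$ — are recomputed with $P=p$, while the joint law of the root noises and of $A$ is left untouched. Writing $T = ta^{\cG}(\hR) \setminus \{P\}$ for the remaining terminal ancestors of $\hR$ in the intervened graph $\cG$ (in the running example $T = \{A, N_X\}$), the interventional law $\bP(\hR \given do(P=p))$ is exactly the pushforward under $\hR = R_\theta\bigl(p, X(p,T)\bigr)$ of the fixed distribution of $T$. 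Hence \eqref{eq:proxy-eq} holds iff the distribution of this random variable does not depend on $p$.

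Next I would invoke expressibility to rewrite the composed map $(p,T) \mapsto R_\theta(p, X(p,T))$ as a member $R_{\theta'}$ of the same hypothesis class, now taking $P$ and $T$ as arguments. In the linear example, substituting $X = \alpha_X A + \beta P + N_X$ gives $\hR = (\lambda_P + \lambda_X\beta)\,P + \lambda_X\alpha_X A + \lambda_X N_X$, which is again linear, so $\theta' = (\lambda_P + \lambda_X\beta,\, \lambda_X\alpha_X,\, \lambda_X)$. The purpose of this step is that it isolates the \emph{total} dependence of the intervened predictor on $P$ — the direct path $P \to \hR$ together with the indirect path $P \to X \to \hR$ — as the dependence of $R_{\theta'}$ on its first argument.

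I would then use the hypothesis that $R_{\theta_0}$ is constant with respect to its first argument to guarantee that the procedure can actually meet the constraint. Forcing $R_{\theta'}$ to be constant in its first argument is a (typically lower-dimensional) condition on $\theta$ — in the example the single equation $\lambda_P + \lambda_X \beta = 0$. The assumed existence of $\theta_0$, pulled back through the expressibility reparametrization, certifies that this constraint set is non-empty, so the procedure returns a well-defined $\theta$ for which $R_{\theta'}(p,T)$ does not depend on $p$; since the law of $T$ is invariant under $do(P=p)$, the pushforward — and hence $\bP(\hR \given do(P=p))$ — is identical for every $p$, which is precisely no proxy discrimination. Finally I would check non-triviality: the cancellation constraint fixes only the $P$-channel and leaves the feature coefficient free (for example $\lambda_X \neq 0$ with $\lambda_P = -\lambda_X\beta$), so the returned predictor genuinely uses $X$, which exhibits potential proxy discrimination.

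The hard part is the second and third steps, \ie{} pinning down expressibility so that composing $R_\theta$ with the intervened structural equations stays inside the parametric family, and so that the ``constant in the first argument'' assumption transfers to the composed predictor $R_{\theta'}$. Outside the linear world this reparametrization need not exist or be onto, and the delicate point is to argue that the cancellation of the $P$-channel can be achieved \emph{without} forcing the informative feature channel to vanish — that is, that killing the total effect of $P$ and retaining a non-trivial dependence on $X$ are simultaneously satisfiable. This is exactly what the conjunction of expressibility and the $\theta_0$-assumption is designed to buy.
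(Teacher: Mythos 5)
Your proposal is correct and follows essentially the same route as the paper, which justifies Proposition~\ref{pro:proxydisc} ``by construction of the procedure'': intervene on $P$, substitute down to the terminal ancestors of the intervened graph, invoke expressibility to isolate the total $P$-dependence in the first argument of $R_{\tilde{\theta}}$, and impose $\tilde{\theta}=\theta_0$ so that the pushforward of the (intervention-invariant) law of the remaining roots no longer depends on $p$, with non-triviality witnessed by the free coefficient $\lambda_X$. Your explicit remarks on where the argument could break outside the linear/expressible setting go slightly beyond what the paper states, but they do not change the argument.
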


\begin{enumerate}
  \item Intervene on~$P$ by removing all incoming arrows and replacing the
    structural equation for~$P$ by~$P=p$. For the example in
    \figref{fig:proxytemplate},
    \begin{equation} \label{eq:proxyR}
      P = p, \qquad
      X = \alpha_X A + \beta P + N_X, \qquad
      \hR_{\theta} = \lambda_P P + \lambda_X X \eqp
    \end{equation}
  \item Iteratively substitute variables in the equation for~$\hR_{\theta}$ from
    their structural equations until only root nodes of the intervened graph are
    left, \ie{} write~$R_{\theta}(P, X)$ as~$R_{\theta}(P, g(ta^{\cG}(X)))$ for
    some function~$g$. In the example,~$ta(X) = \{A, P, N_X\}$ and
    \begin{equation}\label{eq:proxyRroots}
      \hR_{\theta} = (\lambda_P + \lambda_X \beta) p + \lambda_X (\alpha_X A + N_X) \eqp
    \end{equation}
  \item We now require the distribution of~$\hR_{\theta}$
    in~\eqref{eq:proxyRroots} to be independent of~$p$, \ie{} for all~$p, p'$
    \begin{equation}\label{eq:proxyconstraint}
      \bP((\lambda_P + \lambda_X \beta) p  + \lambda_X (\alpha_X A + N_X)) =
      \bP((\lambda_P + \lambda_X \beta) p' + \lambda_X (\alpha_X A + N_X)) \eqp
    \end{equation}
    We seek to write the predictor as a function of~$P$ and all the other roots
    of~$\cG$ separately. If our hypothesis class is such that there
    exists~$\tilde{\theta}$ such that~$R_{\theta}(P, g(ta(X))) =
    R_{\tilde{\theta}}(P, \tilde{g}(ta(X) \setminus \{P\}))$, we call the
    structural equation model and hypothesis class specified
    in~\eqref{eq:proxyR} \emph{expressible}. In our example, this is possible
    with~$\tilde{\theta} = (\lambda_P + \lambda_X \beta, \lambda_X)$
    and~$\tilde{g} = \alpha_X A + N_X$. Equation~\eqref{eq:proxyconstraint} then
    yields the \emph{non-discrimination constraint}~$\tilde{\theta} = \theta_0$.
    Here, a possible~$\theta_0$ is~$\theta_0 = (0, \lambda_X)$, which simply
    yields~$\lambda_P = -\lambda_X \beta$.
  \item Given labeled training data, we can optimize the
    predictor~$\hR_{\theta}$ within the hypothesis class as given in
    \eqref{eq:proxyR}, subject to the non-discrimination constraint. In the
    example
    \begin{equation*}
      \hR_{\theta} = -\lambda_X \beta P + \lambda_X X = \lambda_X (X - \beta P) \eqc
    \end{equation*}
    with the free parameter~$\lambda_X \in \bR$.
\end{enumerate}

In general, the non-discrimination constraint~\eqref{eq:proxyconstraint} is by
construction just~$\bP(R \given do(P=p)) = \bP(R \given do(P=p'))$, coinciding
with Definition~\ref{def:overalldisc}.  Thus Proposition~\ref{pro:proxydisc}
holds by construction of the procedure. The choice of~$\theta_0$ strongly
influences the non-discrimination constraint.  However, as the example shows, it
allows~$R_{\theta}$ to exploit features that exhibit potential proxy
discrimination.

\subsection{Avoiding unresolved discrimination}
\label{subsec:unexplaineddiscrimination}

We proceed analogously to the previous subsection using the example graph in
\figref{fig:explanatorytemplate}. Instead of the proxy, we consider a resolving
variable~$E$. The causal dependences are equivalent to the ones in
\figref{fig:proxytemplate} and we again assume linear structural equations
\begin{equation*}
  E = \alpha_E A + N_E, \qquad
  X = \alpha_X A + \beta E + N_X, \qquad
  \hR_{\theta} = \lambda_E E + \lambda_X X \eqp
\end{equation*}

Let us now try to adjust the previous procedure to the context of avoiding
unresolved discrimination.

\begin{enumerate}
  \item Intervene on~$E$ by fixing it to a random variable~$\eta$
    with~$\bP(\eta) = \bP(E)$, the marginal distribution of~$E$ in~$\tcG$, see
    \figref{fig:explanatorytemplate}. In the example we find
    \begin{align} \label{eq:explanatoryR}
      E = \eta, \qquad
      X = \alpha_X A + \beta E + N_X, \qquad
      \hR_{\theta} = \lambda_E E + \lambda_X X \eqp
    \end{align}
  \item By iterative substitution write $R_{\theta}(E, X)$ as~$R_{\theta}(E,
    g(ta^{\cG}(X)))$ for some function~$g$, \ie{} in the example
    \begin{equation}\label{eq:explanatoryRroots}
      \hR_{\theta} = (\lambda_E + \lambda_X \beta) \eta + \lambda_X \alpha_X A + \lambda_X N_X \eqp
    \end{equation}
  \item We now demand the distribution of~$\hR_{\theta}$
    in~\eqref{eq:explanatoryRroots} be invariant under interventions on~$A$,
    which coincides with conditioning on~$A$ whenever~$A$ is a root of~$\tcG$.
    Hence, in the example, for all~$a, a'$
    \begin{equation}\label{eq:explanatoryconstraint}
      \bP((\lambda_E + \lambda_X \beta) \eta + \lambda_X \alpha_X a + \lambda_X N_X)) =
      \bP((\lambda_E + \lambda_X \beta) \eta + \lambda_X \alpha_X a' + \lambda_X N_X)) \eqp
    \end{equation}
\end{enumerate}

Here, the subtle asymmetry between proxy discrimination and unresolved
discrimination becomes apparent. Because~$R_{\theta}$ is not explicitly a
function of~$A$, we cannot cancel implicit influences of~$A$ through~$X$. There
might still be a~$\theta_0$ such that~$R_{\theta_0}$ indeed
fulfils~\eqref{eq:explanatoryconstraint}, but there is no principled way for us
to construct it. In the example,~\eqref{eq:explanatoryconstraint} suggests the
obvious \emph{non-discrimination constraint}~$\lambda_X = 0$. We can then
proceed as before and, given labeled training data, optimize~$\hR_{\theta} =
\lambda_E E$ by varying~$\lambda_E$. However, by setting~$\lambda_X = 0$, we
also cancel the path~$A \to E \to X \to R$, even though it is blocked by a
resolving variable.  In general, if~$R_{\theta}$ does not have access to~$A$, we
can not adjust for unresolved discrimination without also removing resolved
influences from~$A$ on~$R_{\theta}$.

If, however,~$R_{\theta}$ is a function of~$A$, \ie{} we add the term~$\lambda_A
A$ to~$R_{\theta}$ in~\eqref{eq:explanatoryR}, the non-discrimination constraint
is~$\lambda_A = - \lambda_X \alpha_X$ and we can proceed analogously to the
procedure for proxies.

\subsection{Relating proxy discriminations to other notions of fairness}
\label{sec:variants}

Motivated by the algorithm to avoid proxy discrimination, we discuss some
natural variants of the notion in this section that connect our interventional
approach to individual fairness and other proposed criteria. We consider a
generic graph structure as shown on the left in \figref{fig:generalgraph}. The
proxy~$P$ and the features~$X$ could be multidimensional. The empty circle in
the middle represents any number of variables forming a DAG that respects the
drawn arrows.  \Figref{fig:proxytemplate} is an example thereof. All dashed
arrows are optional depending on the specifics of the situation.

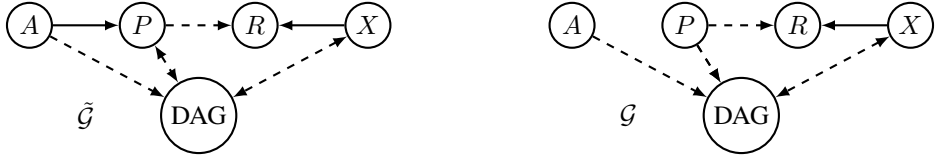
\begin{figure}
  \centering
  \begin{tikzpicture}
  \pgfmathsetmacro{\d}{1.5}
  \pgfmathsetmacro{\dy}{1.2}
    \node[minimum height=1cm] (C) at (0,0) {DAG};
    \node (A) at (-1.5*\d, \dy) {$A$};
    \node (P) at (-0.5*\d, \dy) {$P$};
    \node (R) at ( 0.5*\d, \dy) {$\hR$};
    \node (X) at ( 1.5*\d, \dy) {$X$};
    \draw[->] (A) -- (P);
    \draw[->, dashed] (A) -- (C);
    \draw[<->, dashed] (P) -- (C);
    \draw[<->, dashed] (X) -- (C);
    \draw[->, dashed] (P) -- (R);
    \draw[->] (X) -- (R);
    \node[draw=none] (L) at (-\d,0) {$\tcG$};
  \end{tikzpicture}%
  \hspace{2cm}
  \begin{tikzpicture}
  \pgfmathsetmacro{\d}{1.5}
  \pgfmathsetmacro{\dy}{1.2}
    \node[minimum height=1cm] (C) at (0,0) {DAG};
    \node (A) at (-1.5*\d, \dy) {$A$};
    \node (P) at (-0.5*\d, \dy) {$P$};
    \node (R) at ( 0.5*\d, \dy) {$\hR$};
    \node (X) at ( 1.5*\d, \dy) {$X$};
    \draw[->, dashed] (A) -- (C);
    \draw[->, dashed] (P) -- (C);
    \draw[->, dashed] (P) -- (C);
    \draw[<->, dashed] (X) -- (C);
    \draw[->, dashed] (P) -- (R);
    \draw[->] (X) -- (R);
    \node[draw=none] (L) at (-\d,0) {$\cG$};
  \end{tikzpicture}%
  \caption{{\em Left:} A generic graph~$\tcG$ to describe proxy discrimination.
  {\em Right:} The graph corresponding to an intervention on~$P$. The circle
  labeled ``DAG'' represents any sub-DAG of~$\tcG$ and~$\cG$ containing an
  arbitrary number of variables that is compatible with the shown arrows. Dashed
  arrows can, but do not have to be present in a given scenario.}
\label{fig:generalgraph}
\end{figure}

\begin{definition}\label{def:nondisc}
  A predictor~$\hR$ exhibits no \emph{individual proxy discrimination}, if for
  all~$x$ and all~$p, p'$
  \begin{equation*}
    \bP(\hR \given do(P=p), X=x) = \bP(\hR \given do(P=p'), X=x) \eqp
  \end{equation*}
  A predictor~$\hR$ exhibits no \emph{proxy discrimination in expectation}, if
  for all~$p, p'$
  \begin{equation*}
    \E[\hR \given do(P=p)] = \E[\hR \given do(P=p')] \eqp \label{eq:inexp}
  \end{equation*}
\end{definition}

Individual proxy discrimination aims at comparing examples with the same
features~$X$, for different values of~$P$. Note that this can be individuals
with different values for the unobserved non-feature variables. A true
individual-level comparison of the form ``What would have happened to me, if I
had always belonged to another group'' is captured by \emph{counterfactuals} and
discussed in~\cite{Kusner2017, Nabi2017}.

For an analysis of proxy discrimination, we need the structural equations
for~$P, X, \hR$ in \figref{fig:generalgraph}
\begin{subequations}
  \begin{align*}
    P &= \hat{f}_P (pa(P)) \eqc \\
    X &= \hat{f}_X (pa(X)) = f_X (P, ta^{\cG} (X) \setminus \{P\}) \eqc \\
    \hR &= \hat{f}_{\hR} (P, X) = f_{\hR} (P, ta^{\cG} (\hR) \setminus \{P\}) \eqp
  \end{align*}
\end{subequations}
For convenience, we will use the notation~$ta^{\cG}_{P}(X) := ta^{\cG} (X)
\setminus \{P\}$. We can find~$f_X, f_R$ from~$\hat{f}_X, \hat{f}_R$ by first
rewriting the functions in terms of root nodes of the \emph{intervened graph},
shown on the right side of \figref{fig:generalgraph}, and then assigning the
\emph{overall} dependence on $P$ to the first argument.

We now compare proxy discrimination to other existing notions.

\begin{theorem}\label{thm:rvtodistr}
  Let the influence of~$P$ on~$X$ be additive and linear, \ie{}
  \begin{equation*}
    X = f_X (P, ta^{\cG}_P(X)) = g_X(ta^{\cG}_P(X)) + \mu_X P
  \end{equation*}
  for some function~$g_X$ and~$\mu_X \in \bR$. Then any predictor of the form
  \begin{equation*}
    \hR = r(X - \E [X \given do(P)])
  \end{equation*}
  for some function~$r$ exhibits no proxy discrimination.
\end{theorem}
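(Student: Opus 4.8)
The plan is to verify Definition~\ref{def:overalldisc} directly, \ie{} to show that the law of~$\hR$ under~$do(P=p)$ does not depend on~$p$. The whole argument rests on a single cancellation: the additive-linear form of~$X$ makes its explicit dependence on the intervened value~$p$ exactly match the dependence carried by the centering term~$\E[X \given do(P)]$, so that the residual~$X - \E[X \given do(P)]$ is a random variable whose distribution is invariant to the intervention.

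First I would compute the centering term. Intervening via~$do(P=p)$ replaces the structural equation of~$P$ by~$P=p$ and, by the standard semantics of~$do$, leaves the joint distribution of the remaining root nodes~$ta^{\cG}_P(X)$ unchanged (in particular independent of~$p$). Plugging~$P=p$ into~$X = g_X(ta^{\cG}_P(X)) + \mu_X P$ and taking the expectation over those roots gives
\begin{equation*}
  \E[X \given do(P=p)] = \E\bigl[g_X(ta^{\cG}_P(X))\bigr] + \mu_X\, p =: c + \mu_X\, p \eqc
\end{equation*}
where~$c$ is a constant that does not depend on~$p$ precisely because the distribution of~$ta^{\cG}_P(X)$ is unaffected by the intervention.

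Next I would substitute this into the residual. Under~$do(P=p)$ both the explicit occurrence of~$P$ inside~$X$ and the occurrence inside~$\E[X \given do(P)]$ are evaluated at the same value~$p$, so
\begin{equation*}
  X - \E[X \given do(P=p)] = \bigl(g_X(ta^{\cG}_P(X)) + \mu_X\, p\bigr) - \bigl(c + \mu_X\, p\bigr) = g_X(ta^{\cG}_P(X)) - c \eqp
\end{equation*}
The~$\mu_X\, p$ terms cancel, leaving an expression that is a deterministic function only of the non-$P$ root nodes. Since the joint law of~$ta^{\cG}_P(X)$ is identical for every choice of~$p$, the distribution of~$X - \E[X \given do(P)]$ under~$do(P=p)$ is the same for all~$p$. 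Applying the fixed map~$r$ preserves this, so~$\bP(\hR \given do(P=p)) = \bP(\hR \given do(P=p'))$ for all~$p, p'$, which is exactly no proxy discrimination.

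The main subtlety, and the step I would be most careful about, is bookkeeping around the term~$\E[X \given do(P)]$: it is a function of~$P$ that is \emph{baked into} the predictor~$R(P,X)$, so when the intervention fixes~$P=p$ one must evaluate both this centering term and the generative equation for~$X$ at the \emph{same}~$p$ for the cancellation to be exact. I would also want to state explicitly that~$do$ leaves the other terminal ancestors' distribution intact, since this is what makes~$c$ constant and the residual law intervention-invariant. Finally, it is worth flagging that additivity and linearity in~$P$ are essential: a nonlinear or non-additive coupling between~$P$ and~$ta^{\cG}_P(X)$ would leave residual~$p$-dependence inside~$g_X$ that centering cannot remove, so the exact (rather than merely approximate) cancellation is genuinely a consequence of the stated hypothesis.
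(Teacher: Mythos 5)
Your proposal is correct and follows essentially the same route as the paper: compute $\E[X \given do(P)] $ using the additive-linear form, observe that the residual $X - \E[X\given do(P)]$ reduces to a function of $ta^{\cG}_P(X)$ alone, and conclude invariance of $\bP(\hR)$ under the intervention. If anything, your bookkeeping is slightly more careful than the paper's, which writes $\E[g_X(ta^{\cG}_P(X)) \given do(P)] = 0$ where you correctly keep it as a $p$-independent constant $c$ that cancels harmlessly.
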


Note that in general~$\E[X \given do(P)] \neq \E[X \given P]$. Since in practice
we only have observational data from~$\tcG$, one cannot simply build a
predictor based on the ``regressed out features''~$\tilde{X} := X - \E[X \given
P]$ to avoid proxy discrimination. In the scenario of
\figref{fig:proxytemplate}, the direct effect of~$P$ on~$X$ along the arrow~$P
\to X$ in the left graph cannot be estimated by~$\E[X \given P]$, because of
the common confounder~$A$. The desired interventional expectation~$\E[X \given
do(P)]$ coincides with~$\E[X \given P]$ only if one of the arrows~$A \to P$
or~$A \to X$ is not present. Estimating direct causal effects is a hard problem,
well studied by the causality community and often involves instrumental
variables~\cite{Angrist2001}.

This cautions against the natural idea of using~$\tilde{X}$ as a ``fair
representation'' of~$X$, as it implicitly neglects that we often want to remove
the effect of proxies and not the protected attribute. Nevertheless, the notion
agrees with our interventional proxy discrimination in some cases.

\begin{corollary}\label{cor:rvtodistr}
  Under the assumptions of Theorem~\ref{thm:rvtodistr}, if all directed paths
  from any ancestor of~$P$ to~$X$ in the graph~$\cG$ are blocked by~$P$, then
  any predictor based on the \emph{adjusted features}~$\tilde{X} := X - \E[X
  \given P]$ exhibits no proxy discrimination and can be learned from the
  observational distribution~$\bP(P, X, Y)$ when target labels~$Y$ are
  available.
\end{corollary}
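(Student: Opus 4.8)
The plan is to reduce the corollary to Theorem~\ref{thm:rvtodistr} by showing that, under the extra blocking hypothesis, the \emph{observational} residual $\tilde{X} = X - \E[X \given P]$ coincides with the \emph{interventional} residual $X - \E[X \given do(P)]$ appearing in that theorem. Concretely, I would establish the single identity
\begin{equation*}
  \E[X \given do(P=p)] = \E[X \given P=p] \qquad \text{for all } p \eqc
\end{equation*}
after which the conclusion is immediate: any predictor $\hR = r(\tilde{X})$ is then literally of the form $r(X - \E[X \given do(P)])$, so Theorem~\ref{thm:rvtodistr} applies verbatim and yields no proxy discrimination. Observational learnability is then automatic, since $\E[X \given P]$ depends on the observational law $\bP(P, X)$ alone (estimated by regressing $X$ on $P$), and fitting $r$ uses the labels $Y$ from $\bP(P, X, Y)$.

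To prove the identity I would exploit the additive-linear form $X = g_X(ta^{\cG}_P(X)) + \mu_X P$. Because the variables in $ta^{\cG}_P(X)$ are root nodes of the intervened graph~$\cG$ and hence non-descendants of~$P$, their marginal law is unaffected by $do(P)$, so intervening gives $\E[X \given do(P=p)] = \E[g_X(ta^{\cG}_P(X))] + \mu_X p$, whereas conditioning gives $\E[X \given P=p] = \E[g_X(ta^{\cG}_P(X)) \given P=p] + \mu_X p$. The two expressions agree precisely when $g_X(ta^{\cG}_P(X))$ is mean-independent of~$P$, so it suffices to show $P \indep ta^{\cG}_P(X)$ in the observational distribution.

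The heart of the argument, and the step I expect to be the main obstacle, is turning the graphical hypothesis into this independence. I would argue that the blocking condition forces the terminal ancestors feeding~$P$ and those feeding~$X$ (other than~$P$) to be disjoint: if some root~$W$ were a common terminal ancestor, then~$W$ would be an ancestor of~$P$ admitting a directed path $W \to \dots \to X$ that does not pass through~$P$ (in~$\cG$ the arrows into~$P$ are deleted, so any path surviving to~$X$ avoids~$P$), contradicting the assumption that every directed path from an ancestor of~$P$ to~$X$ is blocked by~$P$. Given disjointness,~$P$ is a deterministic function of one set of mutually independent root variables while $g_X(ta^{\cG}_P(X))$ is a function of a disjoint set, so the two are independent, which delivers the required mean-independence. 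This is exactly the back-door/no-confounding statement anticipated in the discussion after Theorem~\ref{thm:rvtodistr}: it fails in \figref{fig:proxytemplate} precisely because~$A$ is a common ancestor of~$P$ and~$X$ along the unblocked path $A \to X$, whereas removing either $A \to P$ or $A \to X$ restores it. Substituting the established identity into Theorem~\ref{thm:rvtodistr} then completes the proof.
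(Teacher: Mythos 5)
Your proposal is correct and follows essentially the same route as the paper's proof: both reduce the corollary to the identity $\E[X \given do(P)] = \E[X \given P]$ by arguing that the blocking hypothesis forces the ancestors of~$P$ to be disjoint from~$ta^{\cG}(X)$, so that conditioning and intervening coincide, after which Theorem~\ref{thm:rvtodistr} applies verbatim. Your write-up is in fact somewhat more careful than the paper's, which merely asserts the coincidence of the interventional and conditional distributions, whereas you justify it via the additive decomposition and the mutual independence of disjoint sets of root nodes.
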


Our definition of proxy discrimination in expectation (\ref{eq:inexp}) is
motivated by a weaker notion proposed in~\cite{Calders2010}. It asks for the
expected outcome to be the same across the different populations~$\E[R \given
P=p] = \E[R \given P=p'].$ Again, when talking about proxies, we must be careful
to distinguish conditional and interventional expectations, which is captured by
the following proposition and its corollary.

\begin{proposition}\label{pro:inexpectation}
  Any predictor of the form~$\hR = \lambda (X - \E[X \given do(P)]) + c$
  for~$\lambda, c \in \bR$ exhibits no proxy discrimination in expectation.
\end{proposition}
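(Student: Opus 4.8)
The plan is to verify the defining equation of proxy discrimination in expectation directly, namely that $\E[\hR \given do(P=p)]$ is independent of $p$. The key observation is that the predictor $\hR = \lambda(X - \E[X \given do(P)]) + c$ is an affine function of $X$, and expectation is linear, so the interventional expectation of $\hR$ decomposes cleanly. The only subtlety is the meaning of the internal term $\E[X \given do(P)]$: under an intervention $do(P=p)$ this becomes the constant $\E[X \given do(P=p)]$, so the construction is designed precisely so that the $p$-dependent part of $X$ is subtracted off.

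First I would compute $\E[\hR \given do(P=p)]$ by pushing the interventional expectation through the affine form. Writing
\begin{equation*}
  \E[\hR \given do(P=p)] = \lambda\bigl(\E[X \given do(P=p)] - \E[X \given do(P=p)]\bigr) + c = c \eqp
\end{equation*}
Here the crucial point is that inside the predictor the subtracted term $\E[X \given do(P)]$ is itself evaluated at the same intervention $do(P=p)$, so when we take the outer interventional expectation the two copies of $\E[X \given do(P=p)]$ cancel exactly. This leaves $c$, which is manifestly independent of $p$. The same computation with $p'$ in place of $p$ gives $c$ as well, so $\E[\hR \given do(P=p)] = c = \E[\hR \given do(P=p')]$ for all $p, p'$, which is exactly Definition of proxy discrimination in expectation (\ref{eq:inexp}).

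I would note that, unlike Theorem~\ref{thm:rvtodistr}, this proposition requires no additive-linearity assumption on the influence of $P$ on $X$: the cancellation is purely a consequence of the affine dependence of $\hR$ on $X$ together with the linearity of expectation, so it holds for the full interventional distribution regardless of how $P$ enters $X$. The role of the constant $c$ is simply to show that any affine offset is permitted without affecting the non-discrimination property.

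The main obstacle, such as it is, lies not in the algebra but in correctly interpreting the nested interventional expectation: one must ensure that the centering term $\E[X \given do(P)]$ is read as a function of the intervened value, so that under $do(P=p)$ it equals $\E[X \given do(P=p)]$ and cancels the first term rather than leaving a residual $p$-dependent quantity. Once this bookkeeping is made explicit, the result follows immediately by linearity, and there is no deeper difficulty to resolve.
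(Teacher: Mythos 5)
Your proposal is correct and follows essentially the same route as the paper's proof: apply linearity of expectation under $do(P=p)$ so that the two copies of $\E[X \given do(P=p)]$ cancel, leaving the constant $c$. Your additional remark about interpreting the nested term $\E[X \given do(P)]$ as evaluated at the intervened value is a reasonable clarification but does not change the argument.
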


From this and the proof of Corollary~\ref{cor:rvtodistr} we conclude the
following Corollary.

\begin{corollary}
  If all directed paths from any ancestor of~$P$ to~$X$ are blocked by~$P$, any
  predictor of the form~$\hR = r(X - \E[X \given P])$ for linear~$r$ exhibits no
  proxy discrimination in expectation and can be learned from the observational
  distribution~$\bP(P, X, Y)$ when target labels~$Y$ are available.
\end{corollary}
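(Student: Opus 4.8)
The plan is to derive this corollary by combining Proposition~\ref{pro:inexpectation} with the single nontrivial ingredient from the proof of Corollary~\ref{cor:rvtodistr}, namely the identity between the interventional and the observational mean of~$X$. The key observation, which I would recall first, is that the blocking hypothesis---every directed path from an ancestor of~$P$ to~$X$ passes through~$P$---leaves no route by which a common cause of~$P$ and~$X$ can influence~$X$ while bypassing~$P$. Consequently there is no uncontrolled confounding between~$P$ and~$X$, and intervening on~$P$ has the same effect on~$X$ as conditioning on~$P$. This yields $\E[X \given do(P)] = \E[X \given P]$, exactly as in the proof of Corollary~\ref{cor:rvtodistr}, and this is the only place where the graphical assumption is used.

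Granting this identity, the argument is short. I would rewrite the adjusted features as $\tilde{X} := X - \E[X \given P] = X - \E[X \given do(P)]$, so that the two centering operations coincide. Since~$r$ is linear, write $r(z) = \lambda z + c$ for $\lambda, c \in \bR$; then the predictor becomes $\hR = r(\tilde{X}) = \lambda\,(X - \E[X \given do(P)]) + c$. If~$X$ is multidimensional, $\lambda$ is a row vector and the same rewriting goes through componentwise by linearity of expectation. This is precisely the form covered by Proposition~\ref{pro:inexpectation}, which I would invoke directly to conclude that~$\hR$ exhibits no proxy discrimination in expectation; intuitively, the centering term cancels the interventional mean of~$X$ and leaves $\E[\hR \given do(P=p)] = c$, independent of~$p$.

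For the learnability claim, I would note that $\E[X \given P]$ is a functional of the observational marginal~$\bP(P, X)$ alone, so the adjusted features~$\tilde{X}$ can be formed purely from observational data; given target labels~$Y$, the linear map~$r$ can then be fit against~$\tilde{X}$ using samples from~$\bP(P, X, Y)$, exactly as in the learnability part of Corollary~\ref{cor:rvtodistr}. Crucially, no access to the interventional distribution is needed anywhere, because the blocking hypothesis has already collapsed $\E[X \given do(P)]$ onto the observational $\E[X \given P]$. I do not expect a genuine obstacle here: the statement is the ``in expectation'' shadow of Corollary~\ref{cor:rvtodistr}, and because Proposition~\ref{pro:inexpectation} imposes no condition on the $P \to X$ mechanism, this weaker conclusion does not even require the additive-linear hypothesis of Theorem~\ref{thm:rvtodistr}---the entire content reduces to the interventional-equals-conditional identity together with a one-line application of Proposition~\ref{pro:inexpectation}.
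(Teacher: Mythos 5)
Your proposal is correct and follows essentially the same route the paper intends: the blocking hypothesis collapses $\E[X \given do(P)]$ onto $\E[X \given P]$ exactly as in the proof of Corollary~\ref{cor:rvtodistr}, and writing the linear $r$ as $r(z)=\lambda z + c$ reduces the claim to a direct application of Proposition~\ref{pro:inexpectation}, with learnability following because $\E[X\given P]$ is a functional of the observational distribution. Your side remark that the additive-linear hypothesis of Theorem~\ref{thm:rvtodistr} is not needed here is also accurate and consistent with the corollary's statement.
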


\section{Conclusion}
\label{sec:conclusion}

The goal of our work is to assay fairness in machine learning within the context
of causal reasoning. This perspective naturally addresses shortcomings of
earlier statistical approaches. Causal fairness criteria are suitable whenever
we are willing to make assumptions about the (causal) generating process
governing the data.  Whilst not always feasible, the causal approach naturally
creates an incentive to scrutinize the data more closely and work out plausible
assumptions to be discussed alongside any conclusions regarding fairness.

Key concepts of our conceptual framework are \emph{resolving variables} and
\emph{proxy variables} that play a dual role in defining causal discrimination
criteria. We develop a practical procedure to remove proxy discrimination given
the structural equation model and analyze a similar approach for unresolved
discrimination. In the case of proxy discrimination for linear structural
equations, the procedure has an intuitive form that is similar to heuristics
already used in the regression literature. Our framework is limited by the
assumption that we can construct a valid causal graph. The removal of proxy
discrimination moreover depends on the functional form of the causal
dependencies. We have focused on the conceptual and theoretical analysis, and
experimental validations are beyond the scope of the present work.

The causal perspective suggests a number of interesting new directions at the
technical, empirical, and conceptual level. We hope that the framework and
language put forward in our work will be a stepping stone for future
investigations.

\clearpage{}
\printbibliography[title={References}]

\clearpage{}
\section*{Supplementary material}
\label{sec:appendix}

\subsection*{Proof of Theorem~\ref{thm:unidentifiability}}

\begin{theorem*}
  Given a joint distribution over the protected attribute~$A$, the true
  label~$Y$, and some features~$X_1, \dots, X_n$, in which we have already
  specified the resolving variables, no observational criterion can generally
  determine whether the Bayes optimal unconstrained predictor or the Bayes
  optimal equal odds predictor exhibit unresolved discrimination.
\end{theorem*}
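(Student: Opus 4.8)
The plan is to exhibit a single joint distribution that is compatible with both causal graphs of \figref{fig:unidentifiability}, and to show that the associated Bayes optimal predictor exhibits unresolved discrimination under the right graph but not under the left one. Since every observational criterion is, by definition, a function of the joint distribution of $(R^*, A, Y, X_1, \dots, X_n)$ alone, any such criterion must return the same verdict in the two scenarios, whereas the underlying truth about unresolved discrimination differs. That mismatch is exactly the impossibility claimed, so the whole argument reduces to producing the two matching models and reading off the path structure.

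Concretely, I would first write down structural equation models for the left and right graphs and tune their parameters so that they entail the same joint distribution over $(A, Y, X_1, X_2)$. A convenient choice is to take all variables jointly Gaussian (or, alternatively, binary with carefully chosen conditional probabilities), following the construction of \cite{Hardt2016} that is reinterpreted causally here. The two graphs are not Markov equivalent, so matching the distributions necessarily forces a violation of \emph{faithfulness}: some structural coefficient must be set so that a dependence the graph would generically produce cancels exactly, or a conditional independence holds that the graph does not imply. Tracking which (conditional) independences each graph generically entails, and pinning down the parameter values that make the remaining moments agree while simultaneously making the two Bayes optimal predictors coincide as random variables, is the technical heart of the argument and the step I expect to be most delicate.

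Next I would identify the Bayes optimal unconstrained predictor, $R^* = \E[Y \given \text{features}]$, in each model and check the path criterion. In the left graph one has $Y \indep X_2 \given X_1$, so the optimal predictor reduces to a function of the resolving variable $X_1$ alone; the unique directed path $A \to X_1 \to R^*$ is then blocked by the resolving $X_1$, and $R^*$ exhibits no unresolved discrimination. In the right graph the optimal predictor instead draws on $X_2$ and on $A$ directly, so both the direct path $A \to R^*$ and the path $A \to Y \to X_2 \to R^*$ witness unresolved discrimination, since the resolving $X_1$ lies on neither. Because the parameters were chosen so that the full joint law of $(R^*, A, Y, X_1, X_2)$ coincides, no observational criterion can separate the two cases, which settles the statement for the unconstrained predictor.

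Finally I would repeat the analysis for the Bayes optimal equal odds predictor. The equalized-odds-constrained optimum is obtained from the joint distribution by the post-processing of \cite{Hardt2016}, and is therefore itself a function of the joint law only; hence it has the same distribution in both scenarios while inheriting the differing causal status of the underlying scores with respect to the path structure. The identical distribution-only argument then yields the impossibility for the equal odds predictor as well, completing the proof. The only genuine work beyond bookkeeping is the faithfulness-violating calibration of Step two; everything else is a direct reading of directed paths against the designated resolving variable.
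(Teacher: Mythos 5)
Your strategy is exactly the paper's: reinterpret the two graphs of \figref{fig:unidentifiability} causally, exhibit structural equation models on each that entail the same joint law while the Bayes optimal predictor has different path-status with respect to the resolving variable $X_1$, and conclude that no function of the joint distribution can decide unresolved discrimination. Your reading of the path structure in each graph (blocked path $A \to X_1 \to R^*$ on the left; unresolved paths $A \to R^*$ and $A \to Y \to X_2 \to R^*$ on the right) also matches the paper.

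However, there is a genuine gap: you explicitly defer the one step that constitutes the mathematical content of the proof, namely actually producing the two models and verifying (i) that their joint distributions over $(A, Y, X_1, X_2, R^*)$ agree and (ii) that the \emph{same} variable is Bayes optimal in both. Since the graphs are not Markov equivalent, the existence of such a faithfulness-violating pair is precisely what must be demonstrated, not assumed; calling it ``the most delicate step'' and leaving it open means the counterexample, and hence the theorem, is not established. The paper does this concretely: $A = Ber(\nicefrac{1}{2})$ on $\{-1,1\}$, $X_1$ a mixture of Gaussians $\cN(A \pm 1, 1)$ with logistic weights, $Y = Ber(\sigma(2X_1))$, $X_2 = X_1 - A$ on the left versus $Y = Ber(\sigma(2A))$, $X_2 = \cN(Y,1)$, $X_1 = A + X_2$ on the right, then checks that $R^* = X_1$ is optimal in both and that every factor of $\bP(Y \given A, R^*)\,\bP(R^* \given A)\,\bP(A)$ coincides. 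Your proposed ``all variables jointly Gaussian'' instantiation is also problematic for the second half of the claim: the equalized-odds post-processing of Hardt et al.\ and the notion of an equal odds optimal predictor presuppose a discrete label $Y$, and the paper's argument that $\tilde{R} = X_2$ is the equal odds optimum relies on $X_2 \indep A \given Y$ in the right graph with binary $Y$. So the binary/mixture construction is not an interchangeable convenience but is needed for the equal-odds part to go through.
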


\begin{proof}
  Let us consider the two graphs in \figref{fig:unidentifiability}. First, we
  show that these graphs can generate the same joint distribution~$\bP(A, Y,
  X_1, X_2, \hR^*)$ for the Bayes optimal unconstrained predictor~$\hR^*$.

  We choose the following structural equations for the graph on the
  left\footnote{$\sigma(x) = 1 / (1 + e^{-x})$}
  \begin{itemize}
    \item $A = Ber(\nicefrac{1}{2})$
    \item $X_1$ is a mixture of Gaussians $\cN(A+1, 1)$ with weight~$\sigma(2
      A)$ and $\cN(A-1, 1)$ with weight~$\sigma(-2 A)$
    \item $Y = Ber( \sigma(2 X_1) )$
    \item $X_2 = X_1 - A$
    \item $\hR^* =  X_1$
    \item ($\tilde{\hR} = X_2$)
  \end{itemize}
  where the Bernoulli distribution~$Ber(p)$ without a superscript has
  support~$\{-1, 1\}$.

  For the graph on the right, we define the structural equations
  \begin{itemize}
    \item $A = Ber(\nicefrac{1}{2})$
    \item $Y = Ber(\sigma(2 A))$
    \item $X_2 = \cN(Y,1)$
    \item $X_1 = A + X_2$
    \item $\hR^* = X_1$
    \item ($\tilde{\hR} = X_2$)
  \end{itemize}

  First we show that in both scenarios~$\hR^*$ is actually an optimal score.
  In the first scenario~$Y \indep A \given X_1$ and~$Y \indep X_2 \given X_1$
  thus the optimal predictor is only based on~$X_1$. We find
  \begin{equation}\label{eq:YgivenX1}
    \Pr(Y = y \given X_1 = x_1) = \sigma(2 x_1 y) \eqc
  \end{equation}
  which is monotonic in~$x_1$. Hence optimal classification is obtained by
  thresholding a score based only on~$\hR^* = X_1$.

  In the second scenario, because~$Y \indep X_1 \given \{A, X2\}$ the optimal
  predictor only depends on~$A, X_2$. We compute for the densities
  \begin{subequations}\label{eq:YgivenX2A}
    \begin{align}
      \bP(Y \given X_2, A) & = \frac{\bP(Y, X_2, A)}{\bP(X_2, A)} \\
      &= \frac{\bP(X_2, A \given Y) \bP(Y)}{\bP(X_2, A)} \\
      &= \frac{\bP(X_2 \given Y) \bP(A \given Y)
        \bP(Y)}{\bP(X_2, A)} \\
      &= \frac{\bP(X_2 \given Y) \frac{\bP(Y \given A)
        \bP(A)}{\bP(Y)} \bP(Y)}{\bP(X_2, A)} \\
      &= \frac{\bP(X_2 \given Y) \bP(Y \given A) \bP(A)}
        {\bP(X_2, A)} \eqc
    \end{align}
  \end{subequations}
  where for the third equal sign we use~$A \indep X_2 \given Y$. In the
  numerator we have
  \begin{equation}\label{eq:density}
    \bP(X_2 \given Y=y)(x_2) \bP(Y \given A=a)(y) \bP(A)(a)
    = f_{\cN(y,1)}(x_2) f_{Ber(\sigma(2 a))}(y) f_{Ber(\nicefrac{1}{2})}(a)\eqc
  \end{equation}
  where~$f_{D}$ is the probability density function of the distribution~$D$.
  The denominator can be computed by summing up~\eqref{eq:density} for~$y\in
  \{-1,1\}$. Overall this results in
  \begin{equation*}
    \Pr(Y = y \given X_2 = x_2, A=a) = \sigma(2 y (a + x_2)) \eqp
  \end{equation*}

  Since by construction~$X_1 = A + X_2$, the optimal predictor is again~$\hR^* =
  X_1$. If the joint distribution~$\bP(A, Y, \hR^*)$ is identical in the two
  scenarios, so are the joint distributions~$\bP(A, Y, X_1, X_2, \hR^*)$,
  because of~$X_1 = \hR^*$ and~$X_2 = X_1 - A$.

  To show that the joint distributions~$\bP(A, Y, \hR^*) = \bP(Y \given A,
  \hR^*) \bP(\hR^* \given A) \bP(A)$ are the same, we compare the conditional
  distributions in the factorization.

  Let us start with~$\bP(Y \given A, \hR^*)$.  Since~$\hR^* = X_1$ and in the
  first graph~$Y \indep A \given X_1$, we already found the distribution
  in~\eqref{eq:YgivenX1}. In the right graph,~$\bP(Y \given \hR^*, A) = \bP(Y
  \given X_2 + A, A) = \bP(Y \given X_2, A)$ which we have found
  in~\eqref{eq:YgivenX2A} and coincides with the conditional in the left graph
  because of~$X_1 = A + X_2$.

  Now consider~$\hR^* \given A$. In the left graph we have~$\bP(\hR^* \given A)
  = \bP(X_1 \given A)$ and the distribution~$\bP(X_1 \given A)$ is just the
  mixture of Gaussians defined in the structural equation model. In the right
  graph~$\hR^* = A + X_2 = Y + \cN(A,1)$ and thus~$\bP(\hR^* \given A) = \cN(A
  \pm 1)$ for~$Y = \pm 1$. Because of the definition of~$Y$ in the structural
  equations of the right graph, following a Bernoulli distribution with
  probability~$\sigma(2 A)$, this is the same mixture of Gaussians as the one we
  found for the left graph.

  Clearly the distribution of~$A$ is identical in both cases.

  Consequently the joint distributions agree.

  When~$X_1$ is an resolving variable, the optimal predictor in the left
  graph does not exhibit unresolved discrimination, whereas the graph on the
  right does.

  The proof for the equal odds predictor~$\tilde{\hR}$ is immediate once we
  show~$\tilde{\hR} = X_2$. This can be seen from the graph on the right,
  because here~$X_2 \indep A \given Y$ and both using~$A$ or~$X_1$ would
  violate the equal odds condition. Because the joint distribution in the left
  graph is the same,~$\tilde{\hR} = X_2$ is also the optimal equal odds
  score.\qedhere
\end{proof}

\subsection*{Proof of Proposition~\ref{pro:unawareness}}

\begin{proposition*}
  If there is no directed path from a proxy to a feature, unawareness avoids
  proxy discrimination.
\end{proposition*}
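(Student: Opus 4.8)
The plan is to reduce the interventional statement in Definition~\ref{def:overalldisc} to the elementary causal fact that an intervention on a node cannot affect the distribution of its non-descendants. Under \emph{unawareness}, the predictor does not take~$P$ as an input, so we may write~$\hR = g(X, N_{\hR})$ for some function~$g$ and independent noise~$N_{\hR}$; in particular, the structural equation generating~$\hR$ never mentions~$P$. The hypothesis ``no directed path from a proxy to a feature'' says precisely that no component of~$X$ is a descendant of~$P$ in the causal graph. My claim is that these two facts together force~$\bP(\hR \given do(P=p))$ to be constant in~$p$, which is exactly the non-discrimination condition.

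Concretely, I would first decompose the interventional distribution of the predictor by conditioning on the features,
\begin{equation*}
  \bP(\hR \given do(P=p)) = \int \bP(\hR \given X = x)\, \bP(X = x \given do(P=p))\, dx \eqc
\end{equation*}
where the factor~$\bP(\hR \given X=x)$ carries no dependence on~$p$ because the mechanism producing~$\hR$ from~$X$ (and its own noise) is untouched by the intervention on~$P$; this is where unawareness enters. The proposition then reduces to showing that the second factor~$\bP(X \given do(P=p))$ does not depend on~$p$ either.

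The main step is to establish~$\bP(X \given do(P=p)) = \bP(X)$ for every~$p$, which I would argue directly from the structural equation model viewed as a data generating process. Intervening by~$do(P=p)$ replaces the assignment of~$P$ by the constant~$p$ and leaves every other structural equation intact. Computing the variables recursively from the roots, the value of any feature~$X_i$ is a function only of the noise variables of its ancestors; since there is no directed path from~$P$ to~$X_i$, the node~$P$ is not among those ancestors, so the assigned value~$p$ never enters the computation of~$X_i$. Hence the joint law of~$X$ under~$do(P=p)$ coincides with its observational law and does not depend on~$p$. Plugging this back into the display, the integrand becomes independent of~$p$, so~$\bP(\hR \given do(P=p)) = \bP(\hR \given do(P=p'))$ for all~$p, p'$, matching Definition~\ref{def:overalldisc}.

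The only genuine obstacle is making the ``interventions do not affect non-descendants'' step rigorous; everything else is bookkeeping. I expect the cleanest route is the recursive, ancestral argument sketched above rather than invoking do-calculus machinery or a truncated-factorization identity, since the paper deliberately avoids \emph{d}-separation and treats the structural equation model itself as the generative object.
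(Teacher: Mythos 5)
Your proof is correct and follows essentially the same route as the paper's: both arguments reduce to the observation that, since no feature is a descendant of~$P$, the recursive computation of~$X$ from root noise variables never involves the intervened value~$p$, and unawareness then makes~$\hR$ a function of~$X$ alone, so~$\bP(\hR \given do(P=p)) = \bP(\hR)$. The paper states this more tersely by writing~$\hR = r(ta^{\cG}_P(X))$ directly, whereas you insert an intermediate conditioning on~$X$, but the underlying idea is identical.
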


\begin{proof}
  An unaware predictor~${\hR}$ is given by~${\hR} = r (X)$ for some function~$r$
  and features~$X$. If there is no directed path from proxies~$P$ to~$X$,
  \ie{}~$P \notin ta^{\cG}(X)$, then~${\hR} = r(X) = r(ta^{\cG}(X)) =
  r(ta^{\cG}_P(X))$.  Thus~$\bP({\hR} \given do(P=p)) = \bP({\hR})$ for all~$p$,
  which avoids proxy discrimination.\qedhere
\end{proof}

\subsection*{Proof of Theorem~\ref{thm:rvtodistr}}

\begin{theorem*}
  Let the influence of~$P$ on~$X$ be additive and linear, \ie{}
  \begin{equation*}
    X = f_X (P, ta^{\cG}_P(X)) = g_X(ta^{\cG}_P(X)) + \mu_X P
  \end{equation*}
  for some function~$g_X$ and~$\mu_X \in \bR$. Then any predictor of the form
  \begin{equation*}
    \hR = r(X - \E [X \given do(P)])
  \end{equation*}
  for some function~$r$ exhibits no proxy discrimination.
\end{theorem*}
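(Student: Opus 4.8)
The plan is to verify Definition~\ref{def:overalldisc} directly: I must show that the distribution $\bP(\hR \given do(P=p))$ does not depend on $p$. The whole argument rests on a single cancellation, namely that the additive-linear term $\mu_X P$ appearing in $X$ is exactly matched by the $p$-dependence of the centering quantity $\E[X \given do(P)]$, so that the argument fed into $r$ becomes $p$-free once we intervene.

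First I would evaluate everything in the intervened graph $\cG$. Under $do(P=p)$ the structural equation of $P$ is replaced by $P=p$, and the additive-linear hypothesis gives $X = g_X(ta^{\cG}_P(X)) + \mu_X p$. The crucial observation is that the variables $ta^{\cG}_P(X) = ta^{\cG}(X)\setminus\{P\}$ are root nodes of $\cG$ distinct from $P$; since intervening on $P$ only replaces $P$'s own equation and leaves the independent noise variables generating the remaining roots untouched, the joint distribution of $ta^{\cG}_P(X)$ is the same for every value $p$. Consequently $\E[X \given do(P=p)] = \E[g_X(ta^{\cG}_P(X))] + \mu_X p =: m + \mu_X p$, where $m$ is a constant independent of $p$, and read as a function of $P$ this is $\E[X \given do(P)] = m + \mu_X P$.

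Next I would substitute into the predictor. Under $do(P=p)$ both the $X$ term and the centering term carry the same value $P=p$, so the argument of $r$ collapses to $X - \E[X \given do(P)] = \bigl(g_X(ta^{\cG}_P(X)) + \mu_X p\bigr) - \bigl(m + \mu_X p\bigr) = g_X(ta^{\cG}_P(X)) - m$, in which $\mu_X p$ cancels. Hence $\hR = r\bigl(g_X(ta^{\cG}_P(X)) - m\bigr)$ is a fixed function of $ta^{\cG}_P(X)$ alone, whose law does not depend on $p$ by the observation above; therefore $\bP(\hR \given do(P=p))$ is identical for all $p$, which is exactly no proxy discrimination.

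The main obstacle is the one conceptual point rather than any computation: rigorously justifying that the joint distribution of $ta^{\cG}_P(X)$ is invariant across intervention values, and keeping straight that $\E[X \given do(P)]$ is to be interpreted as the deterministic function $p \mapsto \E[X \given do(P=p)]$ evaluated at the intervened value. Both follow cleanly from reading the model in the intervened graph $\cG$, where $P$ and the ancestors $ta^{\cG}_P(X)$ are separate roots with independent noise; once this is in place the cancellation of $\mu_X p$ is immediate, and the additivity-plus-linearity assumption is seen to be exactly what makes it work.
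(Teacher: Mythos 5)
Your proof is correct and follows essentially the same route as the paper: compute $\E[X \given do(P)]$ under the additive-linear assumption and observe that the $\mu_X P$ term cancels in the argument of $r$, leaving a quantity depending only on $ta^{\cG}_P(X)$, whose law is unaffected by the intervention. You are in fact slightly more careful than the paper, which writes $\E[g_X(ta^{\cG}_P(X)) \given do(P)] = 0$ rather than keeping it as a $p$-independent constant $m$; that constant is harmless either way since it does not affect whether the argument of $r$ depends on $p$.
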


\begin{proof}
  It suffices to show that the argument of~$r$ is constant \wrt{} to~$P$,
  because then~$\hR$ and thus~$\bP(\hR)$ are invariant under changes of~$P$. We
  compute
  \begin{align*}
    \E[X \given do(P)] &= \E[ g_X(ta^{\cG}_P (X)) + \mu_X P \given do(P)] \\
    &= \usub{=0}{\E[ g_X(ta^{\cG}_P (X)) \given do(P)]} + \E[\mu_X P \given do(P)]\\
    &= \mu_X P \eqp
  \end{align*}
  Hence,
  \begin{equation*}
    X - \E[X \given do(P)] = g_X(ta^{\cG}_P (X))
  \end{equation*}
  is clearly constant \wrt{} to~$P$.\qedhere
\end{proof}

\subsection*{Proof of Corollary~\ref{cor:rvtodistr}}

\begin{corollary*}
  Under the assumptions of Theorem~\ref{thm:rvtodistr}, if all directed paths
  from any ancestor of~$P$ to~$X$ in the graph~$\cG$ are blocked by~$P$, then
  any predictor based on the \emph{adjusted features}~$\tilde{X} := X - \E[X
  \given P]$ exhibits no proxy discrimination and can be learned from the
  observational distribution~$\bP(P, X, Y)$ when target labels~$Y$ are
  available.
\end{corollary*}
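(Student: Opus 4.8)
The plan is to reduce the statement to Theorem~\ref{thm:rvtodistr} by showing that, under the stated blocking condition, the \emph{observational} adjustment coincides with the \emph{interventional} one, that is $\E[X \given P] = \E[X \given do(P)]$. Once this identity is in hand, the adjusted features satisfy $\tilde{X} = X - \E[X \given P] = X - \E[X \given do(P)]$, so any predictor $\hR = r(\tilde{X})$ is exactly of the form treated in Theorem~\ref{thm:rvtodistr} and therefore exhibits no proxy discrimination. Everything thus hinges on converting the path-blocking hypothesis into the expectation identity.

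The heart of the argument is a graphical-to-probabilistic translation. Using the additivity assumption $X = g_X(ta^{\cG}_P(X)) + \mu_X P$, I would first note that $P$ is a deterministic function of the root noises that are its ancestors, whereas $g_X(ta^{\cG}_P(X))$ is a function only of the roots in $ta^{\cG}_P(X)$. I claim the blocking condition forces these two sets of roots to be disjoint. Indeed, suppose some $Z \in ta^{\cG}_P(X)$ were also an ancestor of $P$. Since $P$ has no incoming arrows in the intervened graph~$\cG$, the directed path witnessing $Z \in ta^{\cG}_P(X)$ cannot pass through $P$; and because this path uses only arrows that are also present in~$\tcG$, it yields a directed path from an ancestor~$Z$ of~$P$ to~$X$ that is not blocked by~$P$, contradicting the hypothesis. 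As the root noises are mutually independent, disjointness of the two root sets gives $g_X(ta^{\cG}_P(X)) \indep P$ in the observational distribution.

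From this independence the two expectations coincide. Conditioning gives $\E[X \given P] = \E[g_X(ta^{\cG}_P(X)) \given P] + \mu_X P = \E[g_X(ta^{\cG}_P(X))] + \mu_X P$, while intervening on~$P$ leaves the non-descendant roots in $ta^{\cG}_P(X)$ untouched, so $\E[X \given do(P)] = \E[g_X(ta^{\cG}_P(X))] + \mu_X P$ as well. Hence $\E[X \given P] = \E[X \given do(P)]$, and the reduction to Theorem~\ref{thm:rvtodistr} applies, establishing the absence of proxy discrimination. For learnability, observe that $\tilde{X} = X - \E[X \given P]$ depends only on the conditional expectation $\E[X \given P]$, a purely observational quantity estimable from $\bP(P, X)$ (for instance by regressing $X$ on $P$); given target labels, the outer function~$r$ can then be fit from $\bP(P, X, Y)$.

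I expect the main obstacle to be the graphical step: carefully justifying that the path-blocking hypothesis rules out a shared root ancestor of~$P$ and~$X$ --- precisely the confounding that, as the surrounding discussion stresses, separates $\E[X \given P]$ from $\E[X \given do(P)]$ --- while correctly tracking the distinction between directed paths in~$\tcG$ and in the intervened graph~$\cG$. The remaining probabilistic and learnability steps are routine once the independence $g_X(ta^{\cG}_P(X)) \indep P$ has been secured.
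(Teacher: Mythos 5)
Your proof is correct and follows essentially the same route as the paper's: show that the blocking hypothesis makes the ancestors of~$P$ disjoint from~$ta^{\cG}_P(X)$, conclude $\E[X \given P] = \E[X \given do(P)]$, and reduce to Theorem~\ref{thm:rvtodistr}. Your version is in fact somewhat more explicit than the paper's, which asserts the equality of conditioning and intervening directly from the disjointness without spelling out the independence $g_X(ta^{\cG}_P(X)) \indep P$ via the root noises.
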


\begin{proof}
  Let~$Z$ denote the set of ancestors of~$P$. Under the given assumptions~$Z
  \cap ta^{\cG} (X) = \emptyset$, because in~$\cG$ all arrows into~$P$ are
  removed, which breaks all directed paths from any variable in~$Z$ to~$X$ by
  assumption. Hence the distribution of~$X$ under an intervention on~$P$
  in~$\tcG$, where the influence of potential ancestors of~$P$ on~$X$ that
  does not go through~$P$ would not be affected, is the same as simply
  conditioning on~$P$. Therefore~$\E[X \given do(P)] = \E[X \given P]$, which
  can be computed from the joint observational distribution, since we
  observe~$X$ and~$P$ as generated by~$\tcG$.\qedhere
\end{proof}

\subsection*{Proof of Proposition~\ref{pro:inexpectation}}

\begin{proposition*}
  Any predictor of the form~$\hR = \lambda (X - \E[X \given do(P)]) + c$ for
  linear~$\lambda, c \in \bR$ exhibits no proxy discrimination in expectation.
\end{proposition*}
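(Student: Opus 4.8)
The plan is to check the definition directly: no proxy discrimination in expectation requires $\E[\hR \given do(P=p)]$ to be the same for every $p$. I would begin by naming the interventional mean, writing $m(p) := \E[X \given do(P=p)]$, so that the predictor takes the explicit form $\hR = \lambda\,(X - m(P)) + c$ as a deterministic function of $X$ and $P$. The whole point is that the centering term is a function of the \emph{intervention target}~$P$, not a fixed number.

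Next I would evaluate the target quantity under the intervention $do(P=p)$. Setting $P=p$ freezes the centering term to the constant $m(p)$, while $X$ is drawn from the interventional distribution $\bP(X \given do(P=p))$ whose mean is exactly $m(p)$. Linearity of expectation then gives
\[
  \E[\hR \given do(P=p)] = \lambda\,\E[X \given do(P=p)] - \lambda\, m(p) + c = \lambda\, m(p) - \lambda\, m(p) + c = c,
\]
which does not depend on $p$. This is precisely no proxy discrimination in expectation.

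The single point that needs care — and the only place I anticipate any subtlety — is the reading of $\E[X \given do(P)]$ as a function of the intervention target that collapses to the constant $m(p)$ under $do(P=p)$; once that is granted, the cancellation is immediate. It is worth emphasising that, unlike Theorem~\ref{thm:rvtodistr}, this argument does \emph{not} invoke the additive-linear form $X = g_X(ta^{\cG}_P(X)) + \mu_X P$. Because we only require equality of expectations rather than of entire distributions, the centering-and-cancellation works for arbitrary structural equations; the additivity assumption of Theorem~\ref{thm:rvtodistr} is needed only for the stronger distributional conclusion there. One could alternatively present the result as a corollary of that theorem under its hypotheses, but the direct computation above is both shorter and more general.
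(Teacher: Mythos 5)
Your proof is correct and follows essentially the same route as the paper's: apply linearity of expectation under $do(P=p)$ so that the centering term $\E[X \given do(P)]$ collapses to $\E[X \given do(P=p)]$ and cancels against $\E[X \given do(P=p)]$, leaving the constant $c$. Your explicit remark that no additive-linearity of the structural equations is needed is accurate and consistent with the paper, which likewise states and proves this proposition without that hypothesis.
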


\begin{proof}
  We directly test the definition of proxy discrimination in expectation using
  the linearity of the expectation
  \begin{align*}
    \E[\hR \given do(P=p)] &= \E[\lambda (X - \E[X \given do(P)])
      + c \given do(P=p)] \\
    &= \lambda (\E[ X \given do(P=p)] - \E[X \given do(P=p)]) + c \\
    &= c \eqp
  \end{align*}
  This holds for any~$p$, hence proxy discrimination in expectation is
  achieved.\qedhere
\end{proof}

\subsection*{Additional statements}

Here we provide an additional statement that is a first step towards the
``opposite direction'' of Theorem~\ref{thm:rvtodistr}, \ie{} whether we can
infer information about the structural equations, when we are given a predictor
of a special form that does not exhibit proxy discrimination.

\begin{theorem*}
  Let the influence of~$P$ on~$X$ be additive and linear and let the influence
  of~$P$ on the argument of~$\hR$ be additive linear, \ie{}
  \begin{align*}
    f_X (ta^{\cG}(X)) &= g_X(ta^{\cG}_P(X)) + \mu_X P \\
    f_{\hR} (P, ta^{\cG}_P(X)) &= h(g_{\hR}(ta^{\cG}_P(X)) + \mu_{\hR} P)
  \end{align*}
  for some functions~$g_X, g_{\hR}$, real numbers~$\mu_X, \mu_{\hR}$ and a
  smooth, strictly monotonic function~$h$. Then any predictor that avoids proxy
  discrimination is of the form
  \begin{equation*}
    {\hR} = r(X - \E[X \given do(P)])
  \end{equation*}
  for some function~$r$.
\end{theorem*}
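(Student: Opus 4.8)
The plan is to exploit the additive form of both structural equations together with the strict monotonicity of~$h$, arguing in two stages: first a purely structural observation that the predictor's dependence on the hidden ancestors of~$X$ must be mediated by the observed features, and then a measure-theoretic argument that no proxy discrimination forces the explicit coefficient~$\mu_{\hR}$ to vanish.

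First I would write everything in terms of the roots of the intervened graph~$\cG$. Abbreviating~$U := ta^{\cG}_P(X)$, the assumptions read~$X = g_X(U) + \mu_X P$ and~$\hR = h(g_{\hR}(U) + \mu_{\hR} P)$. The key structural point is that~$\hR$ is, by definition, a function~$\hat{f}_{\hR}(P,X)$ of the observed inputs alone, so its root-form can depend on~$U$ only through the combination that enters~$X$, namely~$g_X(U) = X - \mu_X P$. Concretely, fixing~$P = p$ and letting~$U$ vary, $\hR = \hat{f}_{\hR}(p,\, g_X(U) + \mu_X p)$ depends on~$U$ only via~$g_X(U)$; comparing this with~$\hR = h(g_{\hR}(U) + \mu_{\hR} p)$ and using that~$h$ is injective (being strictly monotonic), I conclude that~$g_{\hR}(U) + \mu_{\hR} p$, and hence~$g_{\hR}(U)$ itself, depends on~$U$ only through~$g_X(U)$. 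Thus there is a function~$\phi$, independent of~$p$, with~$g_{\hR}(U) = \phi(g_X(U))$, giving
\[
  \hR = h\bigl(\phi(g_X(U)) + \mu_{\hR} P\bigr) \eqp
\]

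Next I would bring in the no-proxy-discrimination hypothesis. Under~$do(P=p)$ the non-$P$ roots~$U$ retain their original law, so~$W := \phi(g_X(U))$ has a distribution that does not depend on~$p$, while~$\hR = h(W + \mu_{\hR} p)$. Since~$h$ is strictly monotonic, hence injective with measurable inverse on its range, the law of~$h(W + \mu_{\hR} p)$ is determined by and determines the law of~$W + \mu_{\hR} p$; therefore~$\bP(\hR \given do(P=p))$ being independent of~$p$ is equivalent to the law of~$W + \mu_{\hR} p$ being independent of~$p$. But~$W + \mu_{\hR} p$ is just~$W$ translated by~$\mu_{\hR} p$, and no probability distribution on~$\bR$ is invariant under a nonzero translation (if~$\mu_{\hR} \neq 0$, picking~$p \neq p'$ would make~$W$ invariant under the nonzero shift~$\mu_{\hR}(p-p')$, a contradiction). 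Hence~$\mu_{\hR} = 0$.

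Finally, with~$\mu_{\hR} = 0$ the display collapses to~$\hR = h(\phi(g_X(U)))$. Since~$g_X(U) = X - \mu_X P$ and, by the computation in the proof of Theorem~\ref{thm:rvtodistr}, $\E[X \given do(P)] = \mu_X P$ up to an additive constant, I set~$r := h \circ \phi$, absorbing that constant, to obtain~$\hR = r(X - \E[X \given do(P)])$, as claimed. I expect the main obstacle to be the first, structural step: one must notice that although~$\hR$ is written as a function of the latent roots~$U$, it is genuinely only a function of the observables~$(P,X)$, which is precisely what forces the latent dependence to factor through~$g_X(U) = X - \mu_X P$. The subsequent vanishing of~$\mu_{\hR}$ then rests on the standard fact that a probability measure cannot be translation invariant, and strict monotonicity of~$h$ is the essential hypothesis throughout, since it is what lets one transfer both the factorization and the distributional invariance across~$h$.
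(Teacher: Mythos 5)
Your proof follows essentially the same route as the paper's: rewrite the predictor in terms of the roots of the intervened graph, use injectivity of~$h$ to push the distributional constraint down to its argument, and invoke the impossibility of a nontrivially translation-invariant probability measure to force~$\mu_{\hR}=0$, after which $\hR = r(X - \E[X\given do(P)])$ with $r$ built from $h$. If anything, your version is slightly more careful in the two places where the paper is loose: you derive only the factorization~$g_{\hR}=\phi\circ g_X$ from the fact that~$\hR$ is a function of the observables (the paper asserts~$g_X=g_{\hR}$ outright), and you justify the necessity of~$\mu_{\hR}=0$ by the translation-invariance argument rather than the paper's informal ``same random variable'' remark.
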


\begin{proof}
  From the linearity assumptions we conclude that
  \begin{equation*}
    \hat{f}_{\hR}(P, X) = h(g_X(ta^{\cG}_P(X)) + \mu_X P + \hat{\mu}_{\hR} P) \eqc
  \end{equation*}
  with~$\hat{\mu}_{\hR} = \mu_{\hR} - \mu_P$ and thus~$g_X = g_{\hR}$. That
  means that both the dependence of~$X$ on~$P$ along the path~$P \to \dots \to
  X$ as well as the direct dependence of~${\hR}$ on~$P$ along~$P \to {\hR}$
  are additive and linear.

  To avoid proxy discrimination, we need
  \begin{subequations}\label{eq:distreq}
  \begin{align}
    \bP ({\hR} \given do(P=p))
    &= \bP(h(g_{\hR}(ta^{\cG}_P(X)) + \mu_{\hR} p)) \\
    &\overset{!}{=} \bP(h(g_{\hR}(ta^{\cG}_P(X)) + \mu_{\hR} p'))
    = \bP ({\hR} \given do(P=p')) \eqp
  \end{align}
  \end{subequations}

  Because~$h$ is smooth an strictly monotonic, we can conclude that already the
  distributions of the argument of~$h$ must be equal, otherwise the
  transformation of random variables could not result in equal distributions,
  \ie{}
  \begin{equation*}
    \bP(g_{\hR}(ta^{\cG}_P(X)) + \mu_{\hR} p)
    \overset{!}{=} \bP(g_{\hR}(ta^{\cG}_P(X)) + \mu_{\hR} p') \eqp
  \end{equation*}
  Since, up to an additive constant, we are comparing the distributions of the
  \emph{same} random variable~$g_{\hR}(ta^{\cG}_P(X))$ and not merely
  identically distributed ones, the following condition is not only
  sufficient, but also necessary for~\eqref{eq:distreq}
  \begin{equation*}
    g_{\hR}(ta^{\cG}_P(X)) + \mu_{\hR} p
    \overset{!}{=} g_{\hR}(ta^{\cG}_P(X)) + \mu_{\hR} p' \eqp
  \end{equation*}
  This holds true for all~$p, p'$ only if~$\mu_{\hR} = 0$, which is equivalent
  to~$\hat{\mu}_{\hR} = - \mu_P$.

  Because as in the proof of~\ref{thm:rvtodistr}
  \begin{equation*}
    \E[X \given do(P)] = \mu_X P,
  \end{equation*}
  under the given assumptions any predictor that avoids proxy discrimination is
  simply
  \begin{equation*}
    {\hR} = X + \mu_{\hR} P = X - \E[X \given do(P)] \eqp
  \end{equation*}\qedhere
\end{proof}

\end{document}